\documentclass{article} % For LaTeX2e
\usepackage{iclr2027_conference,times}

\usepackage{amsmath,amsfonts,bm}

\def\eqref#1{equation~\ref{#1}}
\def\1{\bm{1}}

\DeclareMathAlphabet{\mathsfit}{\encodingdefault}{\sfdefault}{m}{sl}
\SetMathAlphabet{\mathsfit}{bold}{\encodingdefault}{\sfdefault}{bx}{n}

\usepackage{amsmath}     % 若 math_commands 已载入可省
\usepackage{amssymb}     % ← 加这一行
\usepackage{amsthm}
\usepackage{booktabs}
\newtheorem{theorem}{Theorem}
\newtheorem{proposition}{Proposition}
\newtheorem{lemma}{Lemma}

\usepackage{hyperref}    % 最后
\usepackage{url}
\usepackage{booktabs}
\usepackage{array}
\usepackage{longtable}
\usepackage{booktabs,amssymb,tabularx}
\usepackage{graphicx}
\usepackage{wrapfig}
\usepackage{tabularx}
\usepackage{capt-of}
\usepackage{tabularx}

\title{PhysDEM: Physics-Defined Energy-Matching Diffusion for Spatiotemporal Field Generation under Scarce Measurements}

\iclrfinalcopy

\author{Zhenyu Liang \\
HKUST\\
\And
Yining Huang \\
UT Austin\\
\And
Yubo Zhao \\
HKUST\\
\And
Jack C.P. Cheng \\
HKUST\\
}

\begin{document}

\maketitle
\lhead{Preprint} % 必须放在 \maketitle 后，覆盖终稿页眉

% \begin{abstract}

% (DON'T READ! JUST occupy the space) Inferring spatiotemporal physical fields from a handful of costly measurements is essential for understanding complex systems and supporting decisions under uncertainty. In this regime, many fields can explain the same observations, making a distribution over plausible solutions more informative than a single prediction. Previous methods often rely on large collections of observed or simulated fields, or primarily seek a single reconstruction, limiting their suitability for uncertainty quantification when measurements are scarce. We introduce PhysDEM, a diffusion sampler that combines governing equations with scarce measurements without requiring an external dataset of complete fields. Its construction follows three stages: calibrate, train, and sample. First, an observation-conditioned Gaussian reference distribution is reweighted by a physical residual penalty to define the target distribution. Second, we show that denoising reduces to learning how the physical penalty shifts a Gaussian mean, and estimate this displacement using locally linearised physics and importance weighting. Third, a probability-flow sampler uses the learned displacement to transform reference draws into an ensemble of plausible physical fields. Additional linear Gaussian measurements are incorporated by updating the reference 

% \end{abstract}

\begin{abstract}
Generating and predicting spatiotemporal physical fields from scarce measurements is challenging, as observations are insufficient to characterize a distribution over complete fields. This limits conventional data-driven diffusion models that rely on full-field datasets. We introduce \textbf{PhysDEM}, a physics-defined diffusion framework that combines governing equations with spatially sparse observations to generate multiple plausible fields. First, we construct a Gibbs target by reweighting a measurement-conditioned Gaussian reference with PDE residual energy. Second, we derive an exact conditional-mean identity that reduces denoising to supervised learning of the standardized energy-induced mean correction. Third, a physics-displacement probability flow cancels Gaussian reference terms and enables amortized sampling with changing measurements through Gaussian conditioning, without retraining. Experiments on synthetic PDE systems and real-world-informed applications demonstrate that PhysDEM supports coherent field recovery and efficient sampling while maintaining stable diagnostics under tested noise levels, illustrating its practical value for field assessment. To our knowledge, PhysDEM is the \emph{first} physics-defined diffusion model enabling amortized spatiotemporal field inference without preassembled full-field datasets.
\end{abstract}

% To our knowledge, PhysDEM is the first physics-defined diffusion model, in which the governing equations alone define the target distribution and generate the training fields.
% By defining the target through physics and generating its own training fields, PhysDEM enables uncertainty-aware field inference without preassembled full-field datasets.

% \section{Introduction}
% \label{sec:introduction}

% Measurements at a few locations may be consistent with many full spatiotemporal fields. PhysDEM uses these measurements to calibrate the model and constrain a probability distribution over the possible fields, whose relative plausibility reflects both the physical model and the measurements. Sampling from this distribution generates plausible fields across the entire domain without requiring an external dataset of complete fields.

\section{Introduction}
\label{sec:intro}

Generating and predicting spatiotemporal physical fields is essential for scientific modeling. To address this task, generative artificial intelligence uses neural networks to learn probability distributions over plausible field realizations and generate new fields by sampling from these distributions. Diffusion models, in particular, generate samples from high-dimensional distributions through iterative denoising~\citep{song2021score}. Their key strengths include training stability, sample fidelity, distributional coverage~\citep{dhariwal2021diffusion}, and flexible conditioning~\citep{song2021score,rozet2023sda}, supporting the generation of spatially detailed field ensembles that represent diverse temporal evolutions and their uncertainty~\citep{price2025gencast,rozet2023sda}.

Most existing diffusion models for physical-field generation are \emph{data-driven}. Some learn physical patterns implicitly from extensive reanalysis or simulation datasets~\citep{price2025gencast,rozet2023sda}, whereas others explicitly incorporate governing partial differential equations (PDEs) through physics-informed training losses~\citep{bastek2025physics,wang2026fundiff} or sampling guidance~\citep{huang2024diffusionpde,jacobsen2025cocogen}. Both strategies typically rely on large collections of complete fields for training. This dependence limits their applicability when the available data consist only of scarce measurements, with time series recorded at a handful of spatial locations. These scarce measurements alone are insufficient to characterize a distribution over complete fields.

% To address this limitation, we propose a \emph{physics-driven} diffusion framework for spatiotemporal field generation based on the principle that \textbf{physics defines the target distribution, and measurements provide constraints}. We use the governing physical equations to construct a target distribution for denoiser training, without requiring an external dataset of complete fields. At inference, scarce measurements collected at a handful of spatial locations condition the generation of complete fields. This framework uses limited local observations to estimate field values in unobserved regions and explore a range of plausible evolutions of complete spatiotemporal fields.

To address this limitation, we propose a \emph{physics-driven} diffusion framework for spatiotemporal field generation, in which \textbf{physics defines the target distribution, and measurements provide constraints}. We use the governing equations to define a PDE residual energy and construct a target distribution from which we sample training fields, without requiring a preassembled full-field dataset from real-world measurements or external simulators. Once trained, the model serves as an amortized sampler, using scarce measurements at a handful of spatial locations to rapidly estimate unobserved field values and explore diverse plausible evolutions of complete spatiotemporal fields.

% To address this limitation, we propose a \emph{physics-driven} diffusion framework for spatiotemporal field generation, in which \textbf{physics defines the target distribution, and measurements provide constraints}. PDE residuals and an assumed Gaussian reference define the target from which we generate training fields, without requiring a full-field dataset from real-world measurements or external simulators. Once trained, the model serves as an amortized sampler, using scarce measurements at a handful of spatial locations to rapidly estimate unobserved field values and explore diverse plausible evolutions of complete spatiotemporal fields.

\begin{table}[t]
\centering
\caption{Comparison of spatiotemporal physical generation methods.
P: explicit use of prescribed PDE constraints in generative modeling;
D: training without requiring a full-field dataset obtained from measurements or forward simulations;
R: reuse across different numbers of measurements without retraining;
S: full-field inference from spatially sparse measurements.}
\label{tab:related-work}
\begin{tabularx}{\linewidth}{@{}Xcccc@{}}
\toprule
Method family and representative & P & D & R & S \\
\midrule
Data-trained GANs
(e.g., DGMR~\citep{ravuri2021skilful})
& $\times$ & $\times$ & $\times$ & $\times$ \\

Physics-informed GANs
(e.g., PI-GAN~\citep{yang2020physics})
& $\checkmark$ & $\checkmark$ & $\times$ & $\checkmark$ \\

Simulation-based flow matching
(e.g., FNOPE~\citep{moss2025fnope})
& $\times$ & $\times$ & $\checkmark$ & $\checkmark$ \\

Data-trained diffusion
(e.g., DPS~\citep{chung2023dps})
& $\times$ & $\times$ & $\checkmark$ & $\checkmark$ \\

Physics-loss diffusion
(e.g., PIDM~\citep{bastek2025physics})
& $\checkmark$ & $\times$ & $\times$ & $\times$ \\

Physics-guided diffusion
(e.g., DiffusionPDE~\citep{huang2024diffusionpde})
& $\checkmark$ & $\times$ & $\checkmark$ & $\checkmark$ \\

Learned-physics diffusion
(e.g., PHDME~\citep{tan2026phdme})
& $\times$ & $\times$ & $\times$ & $\times$ \\

Energy-based diffusion samplers
(e.g., iDEM~\citep{akhoundsadegh2024idem})
& $\times$ & $\checkmark$ & $\times$ & $\times$ \\
\midrule
\textbf{Physics-defined diffusion (PhysDEM, ours)}
& $\checkmark$ & $\checkmark$ & $\checkmark$ & $\checkmark$ \\
\bottomrule
\end{tabularx}
\end{table}

Specially, we introduce \textbf{PhysDEM}, \underline{\textbf{Phys}}ics-\underline{\textbf{D}}efined \underline{\textbf{E}}nergy-\underline{\textbf{M}}atching Diffusion, for spatiotemporal field generation under scarce measurements. Our mathematical derivation links the PDE residual energy to target distribution construction, denoiser training, and field sampling. In this framework, we define a \textbf{Gibbs target distribution} by reweighting a Gaussian reference that encodes assumed field variability with the PDE residual energy that promotes physical consistency. We then derive an \textbf{energy-determined conditional-mean diffusion denoiser} that implements energy matching by learning the remaining energy-induced mean correction, aligning denoiser training with the physics-defined target. At inference, scarce measurements update the Gaussian reference and denoiser inputs. The denoiser's predicted mean correction drives our \textbf{physics-displacement probability-flow sampler}, enabling amortized sampling of plausible complete spatiotemporal fields across measurement configurations without retraining. Our main contributions are as follows:

\begin{itemize}

\item We propose a physics-defined and measurement-constrained Gibbs target distribution that enables denoiser training on internally sampled complete fields without requiring a preassembled full-field dataset from real-world measurements or external simulators.

\item We mathematically derive an energy-determined conditional-mean diffusion denoiser through an exact conditional-mean identity, establishing energy matching as supervised learning of the energy-induced mean correction relative to an analytic Gaussian estimate.

\item We develop an amortized physics-displacement probability-flow sampler that analytically cancels Gaussian reference terms and uses the learned normalized correction as its velocity, accommodating changing measurements through Gaussian conditioning without retraining.
\end{itemize}

\section{Related Work}
\label{sec:related-work}

\paragraph{Spatiotemporal physical field generation.}
Generative artificial intelligence models support spatiotemporal field generation by learning distributions over plausible field realizations. GANs such as DGMR generate realistic precipitation sequences but rely on large radar archives for training~\citep{ravuri2021skilful}. PI-GAN incorporates governing equations and ensembles of scattered sensor measurements but remains susceptible to discriminator overfitting~\citep{yang2020physics}. FNOPE uses conditional flow matching for function-valued posterior inference but requires paired parameter samples and simulator outputs~\citep{moss2025fnope}. In comparison, diffusion offers an alternative based on non-adversarial denoising and flexible conditioning~\citep{song2021score}, iteratively refining noisy fields according to learned spatiotemporal dependencies to generate spatially detailed, temporally coherent, and diverse fields~\citep{price2025gencast,rozet2023sda,cachay2023dyffusion}.

\paragraph{Physics-constrained diffusion models.}
Physical constraints can enter diffusion models through PDE residual penalties during training~\citep{bastek2025physics} or guidance during sampling~\citep{huang2024diffusionpde,li2026dibilps}. However, these approaches generally rely on full-field training data obtained from measurements or forward simulations. A learned-physics approach~\citep{tan2026phdme} estimates dynamics from observations without fully specified governing equations, but has not been demonstrated for full-field inference from scarce measurements alone. Energy-based diffusion samplers instead learn to sample from prescribed unnormalized densities without precomputed target datasets~\citep{vargas2023denoising,akhoundsadegh2024idem}, but their original formulations do not explicitly support reuse across varying measurement counts without retraining. Compared with these state-of-the-art methods, PhysDEM provides a physics-defined diffusion framework that samples its own training fields from a PDE-based target and generates complete fields from scarce measurements while accommodating different measurement counts without retraining (see Table~\ref{tab:related-work}).

\section{Preliminaries}
\label{sec:prelim}

\subsection{Spatiotemporal physical fields and PDE residuals}
\label{sec:prelim:system}

We consider a physical system on a spatial domain $\Omega\subset\mathbb R^{n_\Omega}$ over $[0,T]$, where $n_\Omega$ is the spatial dimension and $T$ is the final time. Its possibly vector-valued state field $u(\xi,t)$, with $\xi\in\Omega$ and $t\in[0,T]$, satisfies the first-order evolution equation
\begin{equation}
\partial_t u+\mathcal N_z(u)-q_z=0.
\label{eq:pde}
\end{equation}
Here $\partial_t$ denotes differentiation with respect to physical time, $\mathcal N_z$ is a spatial differential operator, and $q_z$ is the source or forcing term. The static parameter field $z$ collects unknown, time-independent system parameters on which $\mathcal N_z$ and $q_z$ depend \citep{stuart2010inverse}. We discretize $z$ and $u$ on spatial and space--time grids, respectively, using the same symbols for their discrete representations. The complete candidate field is $x=(z,u)\in\mathbb R^d$, where $d$ is the total number of unknown scalar values. We model $z$ and $u$ jointly, with their physical coupling encoded through the PDE residual energy.

To incorporate the PDE into the target distribution, we measure how closely each candidate $x$ satisfies \eqref{eq:pde}. Let $\mathcal R_{\mathrm{full}}(x)\in\mathbb R^{n_f}$ collect the $n_f$ PDE residuals obtained by substituting $x$ into the discretized PDE at interior grid points. Each entry measures a local discrepancy and vanishes when the corresponding discrete equation is satisfied. Boundary conditions are incorporated into the discrete operators. These residuals form the basis of the PDE residual energy $E$ in Section~\ref{sec:method:tilt}.

\subsection{Scarce measurements}
\label{sec:prelim:obs}

We consider time-series measurements at only a handful of spatial locations. For measurement group $j$, we assume a linear observation model with additive Gaussian noise:
\begin{equation}
y_j=A_jx+\eta_j,
\qquad
\eta_j\sim\mathcal N(0,\Gamma_j).
\label{eq:obs}
\end{equation}
Here $y_j\in\mathbb R^{n_j}$ contains the $n_j$ scalar measurements in group $j$, and $A_j\in\mathbb R^{n_j\times d}$ extracts or averages the corresponding field values. The covariance matrices $\Gamma_j$ are positive definite, and the noise vectors are mutually independent and independent of the field. We collect the measurement operators, values, and noise covariances in the configuration $\mathcal C=\{(A_j,y_j,\Gamma_j)\}_j$. This model enables the closed-form Gaussian conditioning \citep{rasmussen2006gaussian} used in Section~\ref{sec:method:ref}.

\section{Methodology}
\label{sec:method}

% 导言区已有 graphicx 时，无需重复添加：
% \usepackage{graphicx}

\begin{figure}[t]
\centering
\includegraphics[width=\textwidth]
{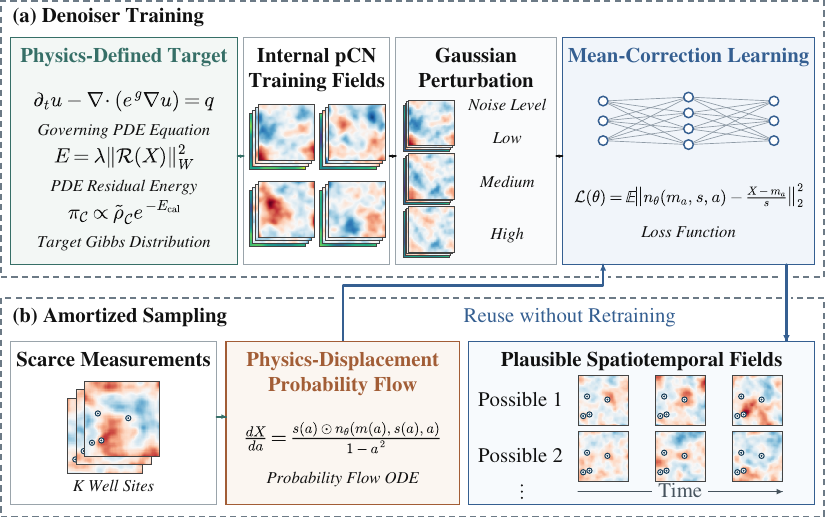}
\caption{Overview of our proposed PhysDEM, a physics-defined diffusion framework for spatiotemporal field generation under scarce measurements.}
\label{fig:physdem-framework}
\end{figure}

PhysDEM comprises three stages: \emph{definition} of the target
distribution, \emph{training} of a mathematically derived denoiser,
and amortized \emph{sampling} of fields conditioned on inference
measurements. 
Sections~\ref{sec:method:measure}--\ref{sec:method:sample}
present them in order. The overall framework is illustrated in Figure~\ref{fig:physdem-framework}.

% PhysDEM comprises three stages: \emph{definition} of the target distribution, \emph{training} of a mathematically derived denoiser, and amortized \emph{sampling} of fields conditioned on inference measurements. The PDE residual energy links these stages. Sections~\ref{sec:method:measure}--\ref{sec:method:sample} present them in order.

\subsection{Physics-defined and measurement-constrained target distribution}
\label{sec:method:measure}

We define a Gibbs target distribution by reweighting a measurement-conditioned Gaussian reference (Section~\ref{sec:method:ref}) using the PDE residual energy $E$ (Section~\ref{sec:method:tilt}), thereby encoding physical consistency and measurement constraints in a distribution over complete fields.

\subsubsection{Measurement-conditioned Gaussian reference}
\label{sec:method:ref}

We first specify a Gaussian base distribution $\rho_0=\mathcal N(\mu_0,C_0)$ for $x$, encoding assumed field means, variability, and correlations. Its positive-definite covariance $C_0=\operatorname{blkdiag}(C_z,C_u)\succ0$ has blocks $C_z$ and $C_u$ describing spatial correlations in $z$ and spatiotemporal correlations in $u$, respectively. This block structure makes $z$ and $u$ independent under $\rho_0$ (Appendix~\ref{app:covariance-specification}).

For any measurement configuration $\mathcal C$, we condition $\rho_0$ using the linear Gaussian model in \eqref{eq:obs} \citep{bishop2006pattern}. The configuration is instantiated with simulated measurements during training and actual measurements at inference. Let $A$ and $y$ stack the measurement operators and values, and let $\Gamma=\operatorname{blkdiag}(\Gamma_j)$ be the noise covariance. With $S=AC_0A^\top+\Gamma$ denoting the covariance of the noisy measurements under $\rho_0$, the conditioned mean and covariance are

\begin{equation}
\bar\mu_{\mathcal C}
=
\mu_0+C_0A^\top S^{-1}(y-A\mu_0),
\qquad
\bar C_{\mathcal C}
=
C_0-C_0A^\top S^{-1}AC_0.
\label{eq:conditioning}
\end{equation}
These define the measurement-conditioned Gaussian reference, with correlations in $C_0$ allowing measurements to inform unmeasured locations:
\begin{equation}
\bar\rho_{\mathcal C}
=
\mathcal N(\bar\mu_{\mathcal C},\bar C_{\mathcal C}).
\label{eq:measurement-reference}
\end{equation}

\subsubsection{Physics-tilted Gibbs distribution}
\label{sec:method:tilt}

The Gaussian reference in \eqref{eq:measurement-reference} incorporates measurements but does not enforce the governing PDEs. To reweight it for physical consistency, we define the PDE residual energy
\begin{equation}
E(x)
=
\lambda\|\mathcal R(x)\|_W^2
=
\lambda\mathcal R(x)^\top W\mathcal R(x).
\label{eq:pde-residual-energy}
\end{equation}
Here $\lambda\ge0$ controls the penalty strength, and $W\in\mathbb R^{n_r\times n_r}$ is a fixed positive-definite weighting matrix. The vector $\mathcal R(x)\in\mathbb R^{n_r}$ collects selected, rescaled combinations of $\mathcal R_{\mathrm{full}}(x)$ and discrepancies from known initial values when available. Our implementation uses selected space--time Fourier components for the interior residuals (Appendices~\ref{app:residual-energy} and~\ref{app:fourier-residual}).

Reweighting the Gaussian reference by $e^{-E(x)}$ gives the Gibbs target:
\begin{equation}
\pi_{\mathcal C}(x)
=
Z_{\mathcal C}^{-1}\bar\rho_{\mathcal C}(x)e^{-E(x)},
\label{eq:tilt}
\end{equation}
where $Z_{\mathcal C}$ normalizes the density. The factor $e^{-E(x)}$ assigns smaller weights to fields with larger residual energy.

We calibrate $\lambda$ against a prescribed relative residual tolerance and adjust the static-field component of $\mu_0$ to target a prescribed expectation of the spatial mean of $z$ (Appendices~\ref{app:penalty-calibration} and~\ref{app:scalar-calibration}). With calibration fixed, we absorb a quadratic approximation of $E$ into the Gaussian reference. We linearize $\mathcal R$ at a global expansion point $p_0\in\mathbb R^d$, fitted without the measurement likelihood, and let $J_0\in\mathbb R^{n_r\times d}$ denote its Jacobian there. The energy $E_{\mathrm{lin}}(x)=\lambda\|\mathcal R(p_0)+J_0(x-p_0)\|_W^2$ defines the augmented Gaussian reference $\tilde\rho_{\mathcal C}(x)\propto\bar\rho_{\mathcal C}(x)e^{-E_{\mathrm{lin}}(x)}$, with mean $\tilde\mu_{\mathcal C}$ and covariance $\tilde C_{\mathcal C}$. Defining the remaining energy as $E_{\mathrm{cal}}=E-E_{\mathrm{lin}}$ yields the equivalent representation
\begin{equation}
\pi_{\mathcal C}(x)
\propto
\tilde\rho_{\mathcal C}(x)e^{-E_{\mathrm{cal}}(x)}.
\label{eq:augmented}
\end{equation}
Subtracting $E_{\mathrm{lin}}$ avoids double counting and preserves the target in \eqref{eq:tilt}. Although $E_{\mathrm{cal}}$ may be negative, the combined density remains normalizable (Appendix~\ref{app:augmented}). Before training, we fix $\rho_0$, $E$, and $p_0$. Different measurement configurations then change the Gaussian reference and target distribution while leaving $E_{\mathrm{cal}}$ unchanged.

\subsection{Energy-determined conditional-mean diffusion denoiser}
\label{sec:method:denoiser}
\label{sec:method:training}

\paragraph{Training-field generation and Gaussian perturbations.}
To form training configurations, we sample pseudo-fields approximately from the target without measurement conditioning, whose density is proportional to $\rho_0(x)e^{-E(x)}$, and simulate scarce measurements using \eqref{eq:obs}. For each resulting configuration $\mathcal C$, preconditioned Crank--Nicolson (pCN) chains generate approximate samples from the corresponding Gibbs target $\pi_{\mathcal C}$~\citep{cotter2013mcmc}. Appendices~\ref{app:configs} and~\ref{app:pcn} describe configuration generation and target sampling, respectively.

We perturb the retained fields to construct denoising examples. For a fixed configuration, write $\tilde\rho_{\mathcal C}=\mathcal N(\tilde\mu,\tilde C)$, with $\tilde\mu=\tilde\mu_{\mathcal C}$ and $\tilde C=\tilde C_{\mathcal C}\succ0$. For the ideal construction, let $X\sim\pi_{\mathcal C}$ denote a random complete-field vector, choose a signal coefficient $a\in[0,1]$, and set $\tau=1-a^2$. With independent Gaussian noise $\xi_{\mathrm G}\sim\mathcal N(0,I_d)$, define
\begin{equation}
X_a
=
\tilde\mu+a(X-\tilde\mu)
+\sqrt{\tau}\,\tilde C^{1/2}\xi_{\mathrm G}.
\label{eq:forward}
\end{equation}
Here $X_a$ is the perturbed field, $I_d$ is the $d$-dimensional identity matrix, and $\tilde C^{1/2}$ is the symmetric positive-definite square root of $\tilde C$. The endpoints $a=1$ and $a=0$ recover the original field and an independent Gaussian reference draw, respectively. During training, we draw $a$ uniformly from $[a_{\min},a_{\max}]$, where $0\le a_{\min}<a_{\max}<1$, independently of the field and Gaussian noise. If $X$ were drawn from $\tilde\rho_{\mathcal C}$, the perturbation would preserve this reference and, for $a<1$, give the conditional mean $m_a=\tilde\mu+a(X_a-\tilde\mu)$ and covariance $\Sigma_a=\tau\tilde C$ (Lemma~\ref{lem:forward} in Appendix~\ref{app:reference-preservation}). We use $m_a$ as the analytic Gaussian denoising estimate.

\paragraph{Deriving the energy-determined conditional-mean denoiser.}
For $m\in\mathbb R^d$ and $\Sigma\in\mathbb R^{d\times d}$ with $\Sigma\succ0$, define the energy-weighted mean
\begin{equation}
D(m,\Sigma)
=
\frac{
\mathbb E_{X\sim\mathcal N(m,\Sigma)}
\left[Xe^{-E_{\mathrm{cal}}(X)}\right]
}{
\mathbb E_{X\sim\mathcal N(m,\Sigma)}
\left[e^{-E_{\mathrm{cal}}(X)}\right]
}.
\label{eq:denoiser}
\end{equation}
The denominator normalizes the energy weights. We assume it is positive and finite and that the weighted first absolute moments are finite. The following theorem identifies $D$ with the conditional mean of the original field, which is the optimal denoising estimate under squared error.

\begin{theorem}[Denoiser identity]
\label{thm:denoiser}
For a fixed measurement configuration $\mathcal C$ and fixed signal coefficient $0\le a<1$, let $X\sim\pi_{\mathcal C}$ and construct $X_a$ using \eqref{eq:forward}. Then
\begin{equation}
\mathbb E[X\mid X_a]
=
D(m_a,\Sigma_a).
\label{eq:conditional-denoiser}
\end{equation}
\end{theorem}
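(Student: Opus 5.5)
The plan is a direct Bayes computation. We show that the posterior of $X$ given $X_a=x_a$ is the Gaussian $\mathcal N(m_a,\Sigma_a)$ reweighted by $e^{-E_{\mathrm{cal}}}$, and then read off its mean.

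\emph{Step 1 (joint density).} By \eqref{eq:augmented}, $X$ has density $\pi_{\mathcal C}(x)=Z^{-1}\tilde\rho_{\mathcal C}(x)e^{-E_{\mathrm{cal}}(x)}$. By \eqref{eq:forward}, conditional on $X=x$ the perturbed field $X_a$ is Gaussian:
\[
X_a\mid X=x\;\sim\;\mathcal N\bigl(\tilde\mu+a(x-\tilde\mu),\,\tau\tilde C\bigr).
\]
Since $\tau=1-a^2>0$ for $a<1$ and $\tilde C\succ0$, this has a density $k_a(x_a\mid x)$. The joint density of $(X,X_a)$ is therefore $Z^{-1}\tilde\rho_{\mathcal C}(x)\,k_a(x_a\mid x)\,e^{-E_{\mathrm{cal}}(x)}$.

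\emph{Step 2 (Gaussian factorization).} The product $g(x,x_a)=\tilde\rho_{\mathcal C}(x)k_a(x_a\mid x)$ is exactly the joint density that would arise if $X$ were drawn from $\tilde\rho_{\mathcal C}$. By Lemma~\ref{lem:forward}, that pair has $X_a$-marginal $\tilde\rho_{\mathcal C}$ and conditional law $\mathcal N(m_a,\Sigma_a)$ of $X$ given $X_a$. This gives the factorization
\[
g(x,x_a)=\tilde\rho_{\mathcal C}(x_a)\,\varphi_{m_a(x_a),\Sigma_a}(x),
\]
where $\varphi$ is the Gaussian density. If one prefers not to rely on the lemma, the factorization can be checked by completing the square. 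The joint Gaussian has covariance blocks $\tilde C$, $a\tilde C$, $\tilde C$, so the conditional mean is $\tilde\mu+a(x_a-\tilde\mu)$ and the conditional covariance is $\tilde C-a^2\tilde C=\tau\tilde C$.

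\emph{Step 3 (Bayes and reading off the mean).} Substituting the factorization, the posterior density of $X$ given $X_a=x_a$ is proportional to $\varphi_{m_a,\Sigma_a}(x)e^{-E_{\mathrm{cal}}(x)}$. The factor $\tilde\rho_{\mathcal C}(x_a)$ cancels in the ratio. The normalizer $\int\varphi_{m_a,\Sigma_a}e^{-E_{\mathrm{cal}}}$ is positive and finite by the standing assumption below \eqref{eq:denoiser}, so this is a valid regular conditional density. The weighted first absolute moment is also assumed finite, so the posterior mean exists and equals $D(m_a,\Sigma_a)$ by \eqref{eq:denoiser}. The case $a=0$ is covered too: then $m_a=\tilde\mu$ and $\Sigma_a=\tilde C$, the posterior is just $\pi_{\mathcal C}$, and $D$ returns its mean, consistent with $X_a$ being independent of $X$.

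\emph{Main obstacle.} Step 2 is the only real computation, and it is routine Gaussian algebra. The subtle points are measure-theoretic. First, the conditional expectation is identified only almost surely in $x_a$. Second, one must justify that the marginal density of $X_a$ is finite and positive; this follows because it equals $Z^{-1}\tilde\rho_{\mathcal C}(x_a)\,\mathbb E_{\mathcal N(m_a,\Sigma_a)}[e^{-E_{\mathrm{cal}}}]$. Third, Fubini must be applicable, which the assumed finiteness of the denominator and of the weighted first moments provides. I would handle all three by verifying the defining property $\mathbb E[X\,h(X_a)]=\mathbb E[D(m_a,\Sigma_a)\,h(X_a)]$ for bounded measurable $h$, using the factorization and Fubini.
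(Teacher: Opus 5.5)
Your proposal is correct and follows the paper's proof: Bayes' rule combined with the Gaussian factorization from Lemma~\ref{lem:forward} shows that the conditional law of $X$ given $X_a$ is $\mathcal N(m_a,\Sigma_a)$ reweighted by $e^{-E_{\mathrm{cal}}}$, and its mean is $D(m_a,\Sigma_a)$. Your extra measure-theoretic checks (positivity and finiteness of the marginal of $X_a$, and verifying the defining property against bounded test functions) are a sound refinement of the paper's brief argument; the finiteness of the normalizer is also what the paper establishes for the query family $\Sigma=\tau\tilde C$ at the start of Appendix~\ref{app:denoiser}.
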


The identity follows because the Gibbs conditional density is the Gaussian conditional density reweighted by the remaining energy:
$p(x\mid X_a)\propto\mathcal N(x;m_a,\Sigma_a)e^{-E_{\mathrm{cal}}(x)}$.
For fixed $\mathcal C$ and $a$, integration by parts under the conditions in Appendix~\ref{app:denoiser-proof} gives
\begin{equation}
D(m_a,\Sigma_a)-m_a
=
-\Sigma_a\,
\mathbb E\!\left[
\nabla E_{\mathrm{cal}}(X)\mid X_a
\right].
\label{eq:energy-induced-correction}
\end{equation}
Thus, the PDE residual energy acts through its remainder $E_{\mathrm{cal}}$ to determine the correction to the Gaussian denoising estimate. With $E_{\mathrm{cal}}$ fixed, the same exact map $D$ applies across measurement configurations through its inputs $(m,\Sigma)$.

\paragraph{Training the diffusion denoiser.}
Evaluating \eqref{eq:denoiser} directly requires high-dimensional integration. We instead train a network $n_\theta(m,s,a)$, with parameters $\theta$, to predict the standardized mean correction. Its inputs are the Gaussian denoising estimate $m_a$, coordinate standard deviations $s=\sqrt{\operatorname{diag}\Sigma_a}$, signal coefficient $a$, and fixed grid coordinates, omitted from the notation. Here $\operatorname{diag}$ extracts the covariance diagonal, and the square root is taken elementwise. Each sampled field supplies a standardized correction label $(X-m_a)/s$, yielding the squared-error objective
\begin{equation}
\mathcal L(\theta)
=
\mathbb E\left[
\left\|
n_\theta(m_a,s,a)
-
\frac{X-m_a}{s}
\right\|_2^2
\right].
\label{eq:loss}
\end{equation}
Division by $s$ is elementwise, and the expectation averages over training configurations, retained fields, Gaussian noise, and $a$. By Theorem~\ref{thm:denoiser}, for a fixed configuration and signal coefficient, exact target draws satisfy
\begin{equation}
\mathbb E\!\left[
\frac{X-m_a}{s}
\,\middle|\,
X_a
\right]
=
\frac{D(m_a,\Sigma_a)-m_a}{s}.
\label{eq:correction-label-mean}
\end{equation}
Energy matching uses these labels to learn the standardized energy-induced mean correction without evaluating the expectations in \eqref{eq:denoiser} or \eqref{eq:energy-induced-correction}. Appendix~\ref{app:unbiased} discusses label variance and sampling assumptions. Across configurations, the network receives explicit covariance information only through its diagonal in $s$; Appendix~\ref{app:covariance-encoding} explains the resulting approximation.

Multiplying the network output by $s$ restores the correction to the original coordinate scale, yielding the denoising estimate $m_a+s\odot n_\theta(m_a,s,a)$, where $\odot$ denotes elementwise multiplication. For sampling, we normalize this correction by $\tau=1-a^2>0$:
\begin{equation}
\widetilde\Delta_\theta(m_a,s,a)
=
\frac{s\odot n_\theta(m_a,s,a)}{1-a^2}.
\label{eq:denoiser-displacement}
\end{equation}
Section~\ref{sec:method:sample} shows how this normalized correction approximates the probability-flow velocity.

\subsection{Physics-displacement probability-flow sampler}
\label{sec:method:sample}

At inference, we condition on the scarce measurements in configuration $\mathcal C$ to obtain the augmented Gaussian reference $\tilde\rho_{\mathcal C}=\mathcal N(\tilde\mu,\tilde C)$, using \eqref{eq:obs} and Section~\ref{sec:method:measure}. The base distribution $\rho_0$, residual energy $E$, expansion point $p_0$, and trained parameters $\theta$ remain fixed. The same denoiser can therefore be reused with updated Gaussian inputs, without retraining (Appendix~\ref{app:measurement-update}).

Let $p_a$ denote the density of the field $X_a$ in \eqref{eq:forward}, with score $\nabla_x\log p_a(x)$. At a candidate field $x$, write $m=\tilde\mu+a(x-\tilde\mu)$ and $\Sigma_a=(1-a^2)\tilde C$. Because the perturbation preserves the Gaussian reference, the linear drift and Gaussian score contribution cancel in the probability-flow ordinary differential equation (ODE)~\citep{song2021score}. For $0<a<1$, its exact velocity satisfies
\begin{equation}
\begin{aligned}
v_a(x)
&=
\frac{x-\tilde\mu+\tilde C\nabla_x\log p_a(x)}{a}
=
\frac{\tilde C}{a}
\nabla_x\log\!\left[
\frac{p_a(x)}{\tilde\rho_{\mathcal C}(x)}
\right]
\\
&=
\frac{D(m,\Sigma_a)-m}{1-a^2}
=
-\tilde C\,
\mathbb E\!\left[
\nabla E_{\mathrm{cal}}(X)\mid X_a=x
\right].
\end{aligned}
\label{eq:exact-flow-velocity}
\end{equation}
The final equality follows from \eqref{eq:energy-induced-correction} under the conditions in Appendix~\ref{app:denoiser-proof}; the mean-correction expression extends to $a=0$. The standardized correction learned during training becomes the learned sampling velocity after rescaling by $s$ and dividing by $1-a^2$. This connects denoiser training to approximate sampling from the same physics-defined, measurement-constrained target, without evaluating the conditional energy-gradient expectation at inference.

We initialize $X(0)\sim\tilde\rho_{\mathcal C}$ and use the learned velocity from \eqref{eq:denoiser-displacement}:
\begin{equation}
\frac{\mathrm dX(a)}{\mathrm da}
=
\widetilde\Delta_\theta(m(a),s(a),a).
\label{eq:flow}
\end{equation}
Here $m(a)=\tilde\mu+a(X(a)-\tilde\mu)$ and
$s(a)=\sqrt{(1-a^2)\operatorname{diag}\tilde C}$
are the conditional mean and coordinate standard deviations
under the Gaussian reference, respectively.
The parameter $a$ indexes denoising, while physical time $t$
remains a coordinate of the complete field. Each trajectory
therefore updates the full field jointly across space and
physical time. We integrate the learned velocity with
second-order Heun steps up to $a_{\max}<1$, generating
approximate target samples from independent reference draws.
Appendix~\ref{app:probability-flow} details the derivation
and approximation errors, while
Appendix~\ref{app:expected-coverage} provides an expected
coverage bound.

% Under suitable regularity and endpoint conditions, the distribution under the exact flow approaches $\pi_{\mathcal C}$ as $a\uparrow1$. In practice, we integrate the learned velocity using second-order Heun steps on a uniform grid ending at $a_{\max}<1$. Independent initial draws yield an ensemble of approximate target samples. Appendix~\ref{app:probability-flow} details the derivation and approximation errors, including training-chain and denoiser errors, covariance encoding, numerical integration, and endpoint truncation. Appendix~\ref{app:expected-coverage} gives an expected coverage bound for credible sets of derived quantities, accounting for model and computational discrepancies.

% Notably, although pCN is asymptotically exact under suitable ergodicity conditions~\citep{cotter2013mcmc}, direct inference requires Markov-chain computation for each new target. PhysDEM provides an amortized sampler: the offline cost of target sampling and denoiser training is shared across repeated inference queries, each using Gaussian conditioning and the learned flow without additional pCN runs. Accuracy on new measurement configurations depends on how well the trained denoiser approximates their energy-induced mean corrections.

% \newpage

% Required packages:
% \usepackage{booktabs,amsmath}

% Required packages:
% \usepackage{booktabs,amsmath}

\begin{table}[t]
\centering
\caption{Baseline diagnostics at $K=4$ with 64-member ensembles. Bold marks the best values by the stated criteria: largest SBC counts and values closest to one elsewhere.}
\label{tab:baselines}

\footnotesize
\setlength{\tabcolsep}{1.5pt}
\renewcommand{\arraystretch}{1.12}

\begin{tabular*}{\linewidth}{@{\extracolsep{\fill}}l*{5}{ccc}@{}}
\toprule
& \multicolumn{3}{c}{Darcy}
& \multicolumn{3}{c}{Adv.--diff.}
& \multicolumn{3}{c}{Fisher--KPP}
& \multicolumn{3}{c}{SPE10}
& \multicolumn{3}{c}{Bemidji} \\
\cmidrule(lr){2-4}
\cmidrule(lr){5-7}
\cmidrule(lr){8-10}
\cmidrule(lr){11-13}
\cmidrule(lr){14-16}
Method
& SBC & $z_{\mathrm{obs}}$ & $r_{\mathrm{PDE}}$
& SBC & $z_{\mathrm{obs}}$ & $r_{\mathrm{PDE}}$
& SBC & $z_{\mathrm{obs}}$ & $r_{\mathrm{PDE}}$
& SBC & $z_{\mathrm{obs}}$ & $r_{\mathrm{PDE}}$
& SBC & $z_{\mathrm{obs}}$ & $r_{\mathrm{PDE}}$ \\
\midrule

DiffusionPDE
& 3/5 & 3.99 & 1.11
& 5/7 & 8.11 & \textbf{1.02}
& 4/5 & 4.40 & \textbf{1.02}
& 1/5 & 13.47 & 0.89
& 4/7 & 9.51 & 1.39 \\

DPS
& 3/5 & 14.79 & 2.53
& 1/7 & 30.03 & 1.07
& 3/5 & 16.56 & 1.49
& 3/5 & 42.85 & 0.77
& 3/7 & 38.19 & 1.95 \\

DAPS
& 3/5 & 1.59 & 1.34
& 5/7 & 4.79 & 0.79
& 1/5 & 1.65 & 1.32
& 3/5 & 5.22 & 0.70
& 2/7 & 5.41 & 1.20 \\

PnP-DM
& 2/5 & 3.25 & 3.28
& 4/7 & 3.81 & 1.49
& 1/5 & 3.82 & 1.95
& 1/5 & 5.37 & 0.88
& 5/7 & 6.04 & 1.68 \\

FunDPS
& 4/5 & 0.21 & 1.39
& 3/7 & 0.24 & 3.31
& \textbf{5/5} & 0.54 & 4.92
& 3/5 & 1.87 & 0.79
& 3/7 & 0.01 & 1.64 \\

PIDM
& 3/5 & 0.81 & 1.54
& 2/7 & 0.91 & 1.18
& 4/5 & 0.83 & 0.68
& 4/5 & 2.26 & 0.73
& 3/7 & 0.80 & 0.85 \\

Di-BiLPS
& 2/5 & 5.82 & 1.50
& 4/7 & 2.70 & 0.52
& 0/5 & 1.92 & 7.39
& 0/5 & 131.47 & 0.54
& 3/7 & 1.80 & 26.29 \\

FNOPE
& 3/5 & 6.57 & 0.00
& \textbf{6/7} & 10.73 & 0.00
& 1/5 & 8.39 & 0.00
& 2/5 & 53.65 & 0.00
& 4/7 & 10.48 & 0.00 \\

\midrule
\textbf{PhysDEM}
& \textbf{5/5} & \textbf{1.00} & \textbf{1.01}
& \textbf{6/7} & \textbf{0.99} & 1.06
& \textbf{5/5} & \textbf{1.09} & 1.12
& \textbf{5/5} & \textbf{0.99} & \textbf{1.00}
& \textbf{7/7} & \textbf{0.96} & \textbf{1.09} \\
\bottomrule
\end{tabular*}
\end{table}

% Preamble, if not already loaded:
% \usepackage{graphicx}

\begin{figure}[!t]
    \centering
    \includegraphics[width=\textwidth]{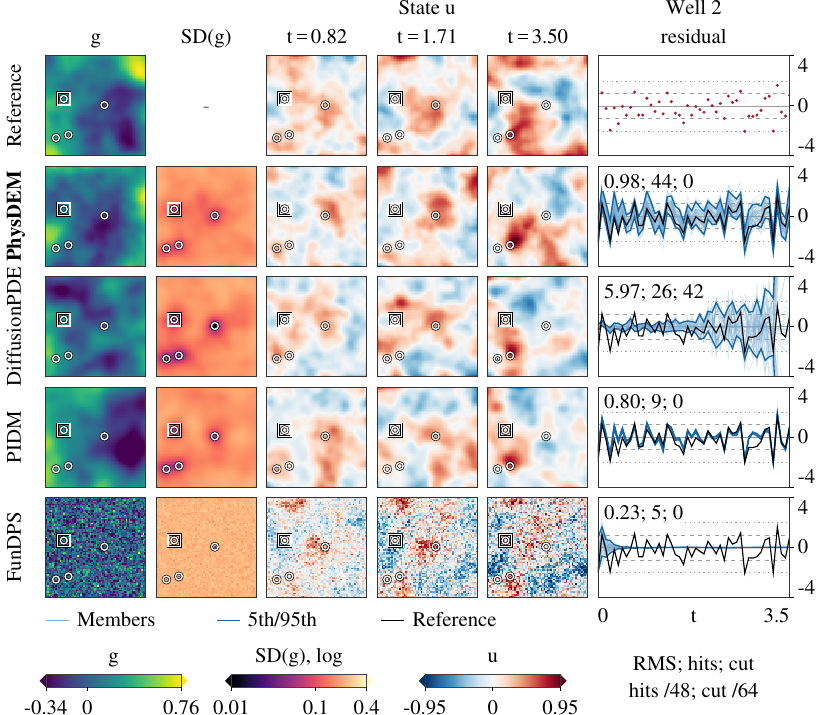}
    \caption{Darcy baseline comparison of fields and observation consistency at $K=4$.}
    \label{fig:darcy-main}
\end{figure}

\section{Experiments and Results}
We evaluate PhysDEM on three synthetic PDE systems (Darcy, advection--diffusion, and Fisher--KPP) and two real-world-informed applications (SPE10 and Bemidji). We compare field recovery from scarce measurements against baselines and examine measurement reuse, ablations, sampling cost, and noise sensitivity. Further case details and data sources are documented in Appendix~\ref{app:exp-setup}.

\begin{figure}[t]
\centering
\includegraphics[width=\textwidth]{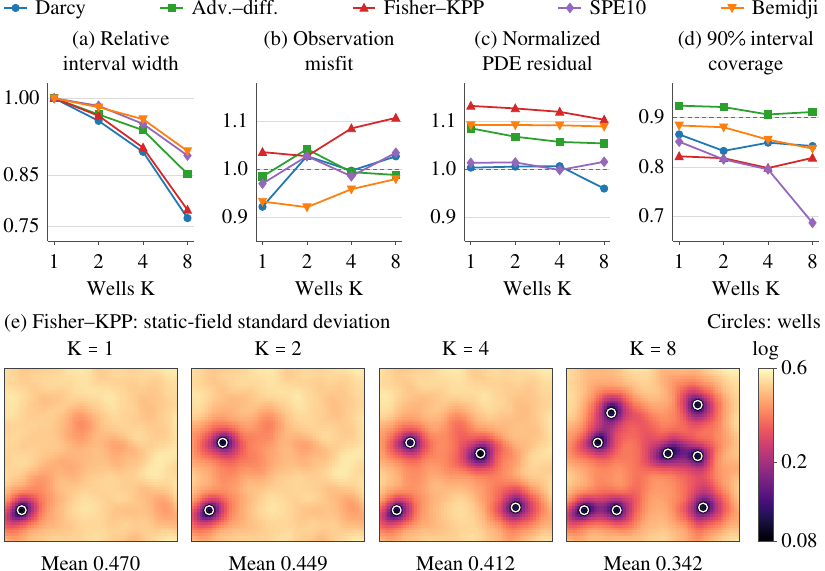}
\caption{Reuse across measurement counts without retraining.}
\label{fig:measurement-reuse}
\end{figure}

\begin{wraptable}{r}{0.50\textwidth}
\centering
\caption{Component ablations and tolerance sensitivity on Darcy
($K=4$, 64-member ensembles).}
\label{tab:ablation}

\setlength{\tabcolsep}{1.5pt}
\begin{tabularx}{\linewidth}{
@{}>{\raggedright\arraybackslash}Xcccc@{}
}
\toprule
Variant
& \shortstack{SBC\\$\max|z|$}
& $z_{\mathrm{obs}}$
& $r_{\mathrm{PDE}}$
& $\mathrm{SSR}_g$ \\
\midrule
Without reference augmentation
& $11.65\,\times$ & 1.05 & 0.62 & 1.53 \\

$\tau$-scaled labels
& $11.84\,\times$ & 2.07 & 0.97 & 0.75 \\

Augmented reference only
& -- & 0.99 & 0.96 & 0.54 \\
\midrule
Tolerance $0.05$
& $4.52\,\times$ & 0.98 & 0.35 & 0.88 \\

Tolerance $0.15$
& $5.20\,\times$ & \textbf{1.00} & 2.10 & \textbf{1.08} \\
\midrule
\textbf{PhysDEM (full)}
& $\mathbf{2.18}\,\checkmark$
& \textbf{1.00}
& \textbf{1.01}
& 0.89 \\
\bottomrule
\end{tabularx}
\end{wraptable}

% 在这里直接接正文，文字会在表格左侧环绕。

\paragraph{Baseline comparison.} Table~\ref{tab:baselines} shows that PhysDEM attains the highest or tied-highest reported SBC counts across five cases, with observation misfits of $0.96$--$1.09$ and selected residual ratios of $1.00$--$1.12$. Four cases pass all reported mean-rank checks. Its observation misfit is closest to the noise scale in each of the five test cases. This balance is achieved using internally generated training fields, without a preassembled external full-field dataset. Figure~\ref{fig:darcy-main} shows coherent spatial structure and temporal evolution under scarce measurements. At the displayed well, PhysDEM's 90\% intervals contain the reference at $44/48$ times. DiffusionPDE shows late-time mismatch ($26/48$ hits), PIDM underdispersed intervals ($9/48$), and FunDPS noisy fields despite tight observation fitting ($5/48$). Protocols and additional comparisons appear in Appendix~\ref{app:exp-baselines}. Figure~\ref{fig:bemidji-site-informed} separately maps the 1995 mean and spread of log benzene concentration, identifying regions of higher predicted concentration and greater ensemble variability. This illustrates PhysDEM's practical value for field assessment in a real-world-informed setting; comparison with 147 records used in reference construction remains descriptive.

\paragraph{Reuse across measurements.} Figure~\ref{fig:measurement-reuse} shows that PhysDEM adapts to $K=1,2,4,8$ wells using one fixed network per case. From one to eight wells, full-field mean pointwise 90\% interval widths decrease by $10.3$--$23.5\%$ across all five cases, while observation and residual diagnostics remain near their reference scales. Fisher--KPP's mean static-field standard deviation falls from $0.470$ to $0.342$, with lower spread around measured wells. These spatial responses show how scarce measurements reshape the generated ensembles. Measurement changes enter through the conditioned Gaussian reference, allowing the learned denoiser to be reused without retraining. This supports repeated inference across the tested well counts. Protocols and the SPE10 extension are in Appendix~\ref{app:exp-measurement-reuse}.

% 导言区已有这些宏包时，无需重复添加：
% \usepackage{graphicx}
% \usepackage{wrapfig}

% 页面顶部：通栏图
\begin{figure}[t]
\centering
\includegraphics[width=\textwidth]
{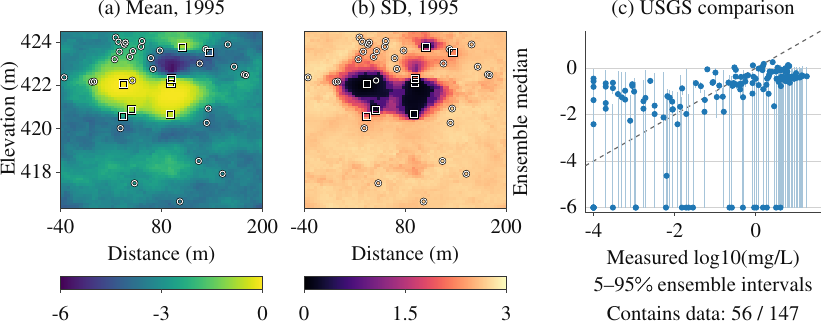}
\caption{Site-informed Bemidji fields and descriptive comparison with 147 USGS records.}
\label{fig:bemidji-site-informed}
\end{figure}

% 页面右侧：两张图共用一个环绕区域，分别编号
\begin{wrapfigure}{r}{0.50\textwidth}
\centering
\includegraphics[width=\linewidth]
{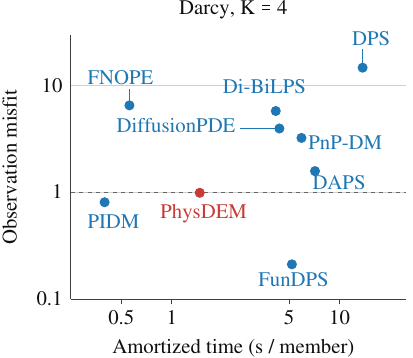}

% 仅增加第一张图 caption 与下一张图之间的间隔
\begingroup
\setlength{\belowcaptionskip}{15pt}
\caption{Amortized sampling cost and observation misfit
on Darcy at $K=4$.}
\label{fig:sampling-cost}
\endgroup

\includegraphics[width=\linewidth]
{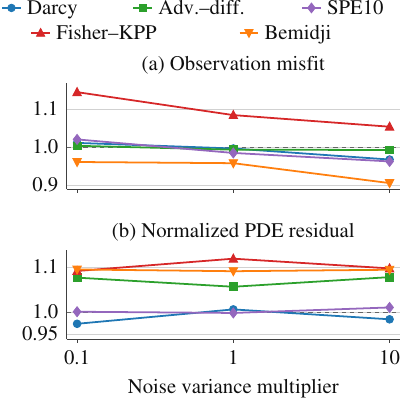}
\caption{Sensitivity to observation-noise variance
using a fixed checkpoint for each case.}
\label{fig:noise-sensitivity}
\end{wrapfigure}

% 在这里紧接正文，文字将在两张小图左侧环绕。

\paragraph{Ablation study.} In Table~\ref{tab:ablation}, training without reference augmentation or using $\tau$-scaled labels raises the largest mean-rank discrepancy from $2.18$ to $11.65$ or $11.84$. These components absorb linearized physics and normalize correction scales. The learned velocity improves the spread--skill balance, raising $\mathrm{SSR}_g$ from $0.54$ to $0.89$. Diagnostics involve trade-offs: tolerance $0.15$ moves SSR closer to one ($1.08$), but fails the rank check and roughly doubles the residual ratio. PhysDEM thus balances the reported diagnostics; see Appendix~\ref{app:ablation} for more details.

\paragraph{Sampling time and noise sensitivity.} Figure~\ref{fig:sampling-cost} shows that PhysDEM takes an amortized $1.47$ s/member on Darcy, faster than six of eight baselines including DiffusionPDE ($4.38$ s/member), with observation misfit closest to one. Figure~\ref{fig:noise-sensitivity} shows observation and residual diagnostics near one in five cases for noise variance multipliers $0.1,1,10$. The learned flow amortizes field inference, while Gaussian conditioning adapts to noise without retraining, supporting repeated inference (Appendix~\ref{app:exp-cost-noise}).

\section{Conclusion and Limitations}
We develop PhysDEM for spatiotemporal field generation from scarce measurements without external full-field datasets. Extensive experiments show coherent field recovery, efficient sampling, and stable diagnostics under tested noise levels. Gaussian conditioning enables amortized sampling with changing measurements without retraining, while site-informed cases illustrate real-world value. Weakly informed static fields retain broad uncertainty, limiting unique recovery. Future work will use richer measurements to improve joint static and dynamic field recovery.

\par
\WFclear
\clearpage
\newpage

\newpage

\bibliography{iclr2027_conference}
\bibliographystyle{iclr2027_conference}

\newpage

\appendix

\section{Supporting preliminaries}
\label{app:prelim-support}

This appendix states the conventions and assumptions used in Sections~\ref{sec:prelim}--\ref{sec:method:sample} and derives the Gaussian identities underlying target construction, denoiser training, and sampling.

\subsection{Notation and mathematical tools}
\label{app:notation}

Write $x=(z,u)\in\mathbb R^d$, where $z\in\mathbb R^{d_z}$, $u\in\mathbb R^{d_u}$, and $d=d_z+d_u$. The vectors collect the discretized static parameter and state fields, including all modeled state components. Physical time $t$ is a coordinate of $u$; the signal coefficient $a$ and auxiliary diffusion time $v$ parameterize denoising. A measurement group is a block of observations, such as a sensor time series. The configuration $\mathcal C=\{(A_j,y_j,\Gamma_j)\}_j$ specifies both the observation setup and its observed values.

\begin{center}
\renewcommand{\arraystretch}{1.08}
\begin{tabular}{@{}p{0.31\linewidth}p{\dimexpr0.69\linewidth-2\tabcolsep\relax}@{}}
\toprule
\textbf{Symbol} & \textbf{Meaning} \\
\midrule
$\mathcal R_{\mathrm{full}},\mathcal R_{\mathrm{ic}}$ & Interior PDE and initial-value residuals. \\
$B,\mathcal R,W$ & Interior selection and scaling, combined residual vector, and positive-definite residual metric. \\
$h,E,\lambda$ & Residual score, PDE residual energy, and penalty strength. \\
$\rho_0,\bar\rho_{\mathcal C},\tilde\rho_{\mathcal C}$ & Base, measurement-conditioned, and augmented Gaussian references. \\
$\pi_{\mathcal C},Z_{\mathcal C}$ & Gibbs target and normalizer relative to $\bar\rho_{\mathcal C}$. \\
$p_0,J_0,E_{\mathrm{lin}},E_{\mathrm{cal}}$ & Fixed global expansion point, residual Jacobian, quadratic energy, and remaining energy. \\
$p,J_p,E_{\mathrm{lin}}^{(p)},Q_J$ & Local expansion point, Jacobian, quadratic energy, and Gaussian proposal reference for training chains. \\
$a,\tau$ & Signal coefficient and noise covariance multiplier. \\
$X,X_a,X(a)$ & Random complete field, perturbed field, and sampling trajectory. \\
$m,\Sigma,D$ & Gaussian query mean, covariance, and energy-weighted mean. \\
$\Delta,\widetilde\Delta$ & Mean correction and its normalization by $\tau$. \\
$s,n_\theta,\widetilde\Delta_\theta$ & Coordinate standard deviations, standardized correction network, and learned sampling velocity. \\
\bottomrule
\end{tabular}
\end{center}

Distribution symbols also denote their densities. We use $M\succ0$ for symmetric positive definiteness and $M_1\preceq M_2$ when $M_2-M_1$ is positive semidefinite. The norm $\|\cdot\|_2$ is Euclidean, and $\|r\|_W^2=r^\top Wr$. Matrix square roots are symmetric unless a sampling factor is explicitly specified. The operator $\operatorname{diag}$ extracts a matrix diagonal or constructs a diagonal matrix from a vector, as determined by its argument. Square roots of variance vectors, division by $s$, and products denoted by $\odot$ are elementwise. Relative entropy $\operatorname{KL}$ uses natural logarithms.

\subsection{Assumptions}
\label{sec:app_assumption}
\label{app:assumptions}

\paragraph{Field and measurement model.}
The analysis concerns the finite-dimensional field representation in Section~\ref{sec:prelim:system}. Parameter and state fields are modeled jointly, with their physical coupling specified by the residual energy rather than by eliminating $u$ through a forward PDE solve. Prescribed boundary conditions enter the discrete operators, and available initial values enter the residual vector defined in Appendix~\ref{app:residual-energy}. The observation model is linear in the modeled coordinates $x$, with positive-definite noise covariances as in \eqref{eq:obs}. In particular, parameter transformations are part of these coordinates: an observation of a log-parameter can be linear in that log-parameter, whereas an observation of the corresponding untransformed coefficient generally is not. The same additive Gaussian noise assumption must hold in the coordinates used by the observation model.

\paragraph{Fixed target family.}
The base covariance satisfies $C_0\succ0$. After calibration, the base distribution $\rho_0$, residual construction $\mathcal R$, weighting matrix $W\succ0$, finite penalty strength $\lambda\ge0$, and global expansion point $p_0$ are held fixed. Measurement configurations may then change the Gaussian reference and target while preserving the remaining energy $E_{\mathrm{cal}}$. This is the fixed-model setting for reuse in Sections~\ref{sec:method:denoiser} and~\ref{sec:method:sample}.

\paragraph{Integrability.}
The score $h$ is measurable, nonnegative, and finite $\bar\rho_{\mathcal C}$-almost surely. Since $E=\lambda h\ge0$, the target normalizer and its polynomial moments satisfy
\begin{equation}
0<Z_{\mathcal C}=\mathbb E_{\bar\rho_{\mathcal C}}[e^{-E(X)}]\le1,
\qquad
\mathbb E_{\pi_{\mathcal C}}\|X\|_2^r
\le Z_{\mathcal C}^{-1}\mathbb E_{\bar\rho_{\mathcal C}}\|X\|_2^r<\infty,
\quad r\ge0.
\label{eq:app-a-target-integrability}
\end{equation}
Appendix~\ref{app:denoiser} establishes the corresponding integrability for the augmented Gaussian queries used in training, including when $E_{\mathrm{cal}}$ is negative. General Gaussian queries are admitted only when their normalizers and required weighted moments are finite.

\paragraph{Differentiation and numerical operations.}
The Jacobian $J(x)=\nabla\mathcal R(x)\in\mathbb R^{n_r\times d}$ acts on a field increment by $J(x)\delta x$. Residual evaluation, Jacobian actions, covariance products, and Gaussian reference draws are assumed available. The target definition requires no differentiability; derivatives, integration by parts, and endpoint limits use the additional conditions in Appendices~\ref{app:denoiser-proof} and~\ref{app:probability-flow}. Their exact identities are separated from the numerical approximations described below.

\subsection{Gaussian identities for quadratic energies}
\label{app:gaussian-facts}

Quadratic energies can be incorporated analytically into a Gaussian reference. This identity supplies both the global augmentation used by the denoiser and the local references used by training chains.

\begin{theorem}[Quadratic energies]
\label{thm:quadratic}
Let $m\in\mathbb R^d$, $\Sigma\succ0$, $L\in\mathbb R^{n_r\times d}$, $b\in\mathbb R^{n_r}$, and $W\succ0$. For $\lambda>0$, the distribution proportional to $\mathcal N(x;m,\Sigma)e^{-\lambda\|Lx-b\|_W^2}$ is Gaussian. Writing $V_\lambda=(2\lambda)^{-1}W^{-1}$ and $S_L=L\Sigma L^\top+V_\lambda$, its parameters are
\begin{equation}
\begin{aligned}
\mu_{\mathrm{tilt}}
&=m+\Sigma L^\top S_L^{-1}(b-Lm),\\
\Sigma_{\mathrm{tilt}}
&=\Sigma-\Sigma L^\top S_L^{-1}L\Sigma
=(\Sigma^{-1}+2\lambda L^\top WL)^{-1}.
\end{aligned}
\label{eq:quadratic}
\end{equation}
At $\lambda=0$, the mean and covariance remain $m$ and $\Sigma$.
\end{theorem}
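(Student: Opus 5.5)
The proof is a completing-the-square computation on the log-density, followed by a matrix identity that reconciles the two covariance expressions. For $\lambda>0$, the logarithm of $\mathcal N(x;m,\Sigma)e^{-\lambda\|Lx-b\|_W^2}$ is, up to an additive constant,
\[
-\tfrac12(x-m)^\top\Sigma^{-1}(x-m)-\lambda(Lx-b)^\top W(Lx-b).
\]
Expanding gives a quadratic form with Hessian $-P$, where $P=\Sigma^{-1}+2\lambda L^\top WL$, and linear coefficient $\Sigma^{-1}m+2\lambda L^\top Wb$.

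Since $\Sigma^{-1}\succ0$ and $L^\top WL\succeq0$ (because $W\succ0$), we have $P\succ0$. Completing the square shows that the unnormalized density equals $\exp(-\tfrac12(x-\mu)^\top P(x-\mu))$ times a constant, with $\mu=P^{-1}(\Sigma^{-1}m+2\lambda L^\top Wb)$. Hence the density is integrable and Gaussian, with covariance $P^{-1}$. This is the second expression for $\Sigma_{\mathrm{tilt}}$ in \eqref{eq:quadratic}.

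The remaining step is to match the gain form. I would read the factor $e^{-\lambda\|Lx-b\|_W^2}$ as a pseudo-likelihood $\mathcal N(b;Lx,V_\lambda)$ with $V_\lambda=(2\lambda)^{-1}W^{-1}$, which holds up to an $x$-independent constant because $V_\lambda^{-1}=2\lambda W$. The tilted law is then exactly the Gaussian posterior for a linear Gaussian observation. The conditioning formulas of the same form as \eqref{eq:conditioning}, with $C_0\to\Sigma$, $A\to L$, $\Gamma\to V_\lambda$, $y\to b$, give $\mu_{\mathrm{tilt}}$ and the first covariance expression directly, with $S_L=L\Sigma L^\top+V_\lambda\succ0$ invertible.

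To make the argument self-contained rather than lean on that correspondence, I would check two things directly. First, apply the Woodbury identity
\[
(\Sigma^{-1}+L^\top V_\lambda^{-1}L)^{-1}=\Sigma-\Sigma L^\top(L\Sigma L^\top+V_\lambda)^{-1}L\Sigma,
\]
which equates the two covariance expressions. Second, verify the push-through identity $P^{-1}L^\top V_\lambda^{-1}=\Sigma L^\top S_L^{-1}$, obtained by multiplying both sides by $P$ on the left and $S_L$ on the right. Substituting into $\mu=P^{-1}\Sigma^{-1}m+P^{-1}L^\top V_\lambda^{-1}b$, and using $P^{-1}\Sigma^{-1}=I-\Sigma L^\top S_L^{-1}L$ from the Woodbury form, yields $\mu=m+\Sigma L^\top S_L^{-1}(b-Lm)$.

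The only step needing care is this algebraic bookkeeping in the mean identity; everything else is routine. The case $\lambda=0$ is trivial, since the weight is identically one and the distribution is $\mathcal N(m,\Sigma)$. This case must be stated separately because $V_\lambda$ is undefined there.
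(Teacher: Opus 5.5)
Your proposal is correct and follows essentially the same route as the paper: complete the square to get $P=\Sigma^{-1}+2\lambda L^\top WL$ with mean $P^{-1}(\Sigma^{-1}m+2\lambda L^\top Wb)$ and covariance $P^{-1}$, then use Woodbury to reach the gain form. Your explicit push-through identity $P^{-1}L^\top V_\lambda^{-1}=\Sigma L^\top S_L^{-1}$ and the observation $S_L\succ0$ simply fill in the mean bookkeeping that the paper leaves implicit.
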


\begin{proof}
Completing the square in the Gaussian log density minus the quadratic energy gives
\begin{equation}
P=\Sigma^{-1}+2\lambda L^\top WL,
\qquad
\mu_{\mathrm{tilt}}=P^{-1}(\Sigma^{-1}m+2\lambda L^\top Wb),
\qquad
\Sigma_{\mathrm{tilt}}=P^{-1}.
\label{eq:app-a-completing-square}
\end{equation}
The Woodbury identity gives the covariance-update form and the corresponding mean in \eqref{eq:quadratic}.
\end{proof}

Define $G=W^{1/2}L\Sigma L^\top W^{1/2}$ and $c=W^{1/2}(Lm-b)$. For $\lambda\ge0$, the Gaussian normalization integral is
\begin{equation}
\Phi_G(\lambda;m,\Sigma)
:=\mathbb E_{X\sim\mathcal N(m,\Sigma)}
\left[e^{-\lambda\|LX-b\|_W^2}\right]
=\frac{\exp\{-\lambda c^\top(I+2\lambda G)^{-1}c\}}
{\det(I+2\lambda G)^{1/2}}.
\label{eq:normconst}
\end{equation}
This follows by diagonalizing the positive-semidefinite matrix $G$ and evaluating the resulting one-dimensional Gaussian integrals; zero eigenvalues are allowed. For $\lambda>0$, the same tilt is conditioning on a virtual linear observation with value $b$, operator $L$, and noise covariance $V_\lambda$. This observation represents the quadratic physical penalty and introduces no additional measurement data.

\section{Target definition, calibration, and Gaussian augmentation}
\label{app:calibrate}

This appendix supports Section~\ref{sec:method:measure}. The Gaussian reference specifies field variability and measurement information, while the PDE residual energy defines their physical reweighting. Calibration selects the model parameters; Gaussian augmentation then exposes an analytic part of the physics without changing the target distribution.

\subsection{Measurement-conditioned Gaussian reference}
\label{app:reference}

\subsubsection{Covariance specification}
\label{app:covariance-specification}

We use the base covariance $C_0=\operatorname{blkdiag}(C_z,C_u)\succ0$ from Section~\ref{sec:method:ref}. Mat\'ern covariance models specify the static and state blocks, with variances, correlation lengths, and smoothness supplied by the physical problem configuration. These choices encode assumed spatial and spatiotemporal structure. Independence between $z$ and $u$ applies to the base Gaussian; the residual energy couples them in the Gibbs target. The scalar calibration in Appendix~\ref{app:scalar-calibration} adjusts the base mean while keeping $C_0$ fixed.

\subsubsection{Measurement conditioning and Matheron's rule}
\label{app:measurement-conditioning}

Stacking the measurement groups gives $A$, $y$, and $\Gamma=\operatorname{blkdiag}(\Gamma_j)$. Let $S=AC_0A^\top+\Gamma$ and $G_A=C_0A^\top S^{-1}$. The conditioned parameters are given by \eqref{eq:conditioning}. Correlations in $C_0$ propagate measurement information to unmeasured locations. With the block-diagonal base, state-only measurements update the state block at this stage; physical reweighting subsequently links state and parameter fields.

Matheron's rule samples the conditional reference without factoring its full covariance~\citep{wilson2021pathwise}. Draw $X_{\mathrm p}\sim\mathcal N(\mu_0,C_0)$ and $\eta_{\mathrm p}\sim\mathcal N(0,\Gamma)$ independently, and set
\begin{equation}
X=X_{\mathrm p}+G_A(y-AX_{\mathrm p}-\eta_{\mathrm p}).
\label{eq:matheron}
\end{equation}
Its mean and covariance are
\begin{equation}
\begin{aligned}
\mathbb E[X]&=\mu_0+G_A(y-A\mu_0)=\bar\mu_{\mathcal C},\\
\operatorname{Cov}(X)
&=(I-G_AA)C_0(I-G_AA)^\top+G_A\Gamma G_A^\top
=\bar C_{\mathcal C}.
\end{aligned}
\label{eq:app-b-matheron-moments}
\end{equation}
The construction requires base draws, covariance products, and a solve with $S$, whose dimension is the number of scalar measurements, including retained time samples. Independent measurement groups may be incorporated sequentially using Appendix~\ref{app:measurement-update}.

\subsubsection{FFT-based Gaussian operations}
\label{app:fft-reference}

For stationary covariance blocks, circulant embedding enables FFT-based products and Gaussian draws~\citep{graham2018circulant}. Let $P_{\mathrm{ext}}$ restrict an extended periodic grid with $M$ points to the modeled grid. An exact embedding of a block $C$ gives
\begin{equation}
C=P_{\mathrm{ext}}C_{\mathrm{ext}}P_{\mathrm{ext}}^\top,
\qquad
X_{\mathrm{blk}}=P_{\mathrm{ext}}C_{\mathrm{ext}}^{1/2}\xi_{\mathrm G},
\qquad
\xi_{\mathrm G}\sim\mathcal N(0,I_M),
\label{eq:app-b-circulant-embedding}
\end{equation}
where $C_{\mathrm{ext}}\succeq0$. Padding, applying an FFT-based covariance product, and restricting costs $O(M\log M)$ per block. An overall $O(d\log d)$ cost requires bounded embedding overhead. The rectangular factor in \eqref{eq:app-b-circulant-embedding} is sufficient for sampling; it need not be the symmetric square root of $C$.

The same effective covariance must be used for draws and products. If spectral clipping changes an embedding, it constitutes a covariance approximation. Positive semidefiniteness of the embedding and positive-definite observation noise do not by themselves establish $C_0\succ0$; the density and precision identities require the positive-definite field covariance assumed in Appendix~\ref{app:assumptions}. The conditioning implementation uses covariance products and measurement-space solves rather than explicitly forming $C_0^{-1}$.

\subsection{Physics-tilted Gibbs distribution and calibration}
\label{app:measure}

\subsubsection{Residual energy and information projection}
\label{app:residual-energy}

Let $B\in\mathbb R^{(n_r-n_{\mathrm{ic}})\times n_f}$ select and rescale the interior residuals. Available initial values give $\mathcal R_{\mathrm{ic}}(x)=A_{\mathrm{ic}}x-b_{\mathrm{ic}}\in\mathbb R^{n_{\mathrm{ic}}}$, where $A_{\mathrm{ic}}$ extracts the corresponding initial-state coordinates and $b_{\mathrm{ic}}$ contains their prescribed values. Then
\begin{equation}
\mathcal R(x)=
\begin{bmatrix}
B\mathcal R_{\mathrm{full}}(x)\\
\mathcal R_{\mathrm{ic}}(x)
\end{bmatrix},
\qquad
h(x)=\|\mathcal R(x)\|_W^2,
\qquad
E(x)=\lambda h(x).
\label{eq:fixed-residual-model}
\end{equation}
The initial block is omitted when no initial values are prescribed; those state coordinates then remain part of the joint field. Boundary conditions are incorporated into the discrete operators. At finite $\lambda$, the retained PDE and initial-value residuals enter as soft penalties. The implementation uses a diagonal $W\succ0$. Nondimensionalization or specified relative scales account for residuals with different units; scaling may be assigned to $B$ or $W$ provided the quadratic form is unchanged.

The Gaussian reference and the PDE energy have complementary roles. For two fields with finite residual energies, their relative density changes according to
\begin{equation}
\frac{\pi_{\mathcal C}(x_1)/\pi_{\mathcal C}(x_2)}
{\bar\rho_{\mathcal C}(x_1)/\bar\rho_{\mathcal C}(x_2)}
=\exp\{-\lambda[h(x_1)-h(x_2)]\}.
\label{eq:app-b-physical-reweighting}
\end{equation}
Thus the physical model explicitly changes the relative weights supplied by the Gaussian reference. The following result further characterizes the Gibbs target as the smallest change to that reference, in relative entropy, that achieves a specified expected residual score.

For fixed $\bar\rho=\bar\rho_{\mathcal C}$ and $h$, write $\pi_\lambda(\mathrm dx)=Z_\lambda^{-1}e^{-\lambda h(x)}\bar\rho(\mathrm dx)$, with $Z_\lambda=\mathbb E_{\bar\rho}[e^{-\lambda h}]$.

\begin{proposition}[Information projection]
\label{prop:iproj}
Suppose $h\ge0$ is nonconstant under $\bar\rho$ and $\varepsilon_0=\mathbb E_{\bar\rho}h<\infty$. Set $h_{\min}=\operatorname*{ess\,inf}_{\bar\rho}h$. For each $\varepsilon\in(h_{\min},\varepsilon_0)$, there is a unique $\lambda>0$ with $\mathbb E_{\pi_\lambda}h=\varepsilon$. The distribution $\pi_\lambda$ uniquely minimizes $\operatorname{KL}(p\|\bar\rho)$ over $p\ll\bar\rho$ satisfying $\mathbb E_p h\le\varepsilon$.
\end{proposition}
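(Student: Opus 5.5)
The argument has two parts: existence and uniqueness of $\lambda$, which reduces to a monotonicity property of the log-partition function, and the minimization property, which follows from a Gibbs-variational identity for relative entropy.

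\textbf{Existence and uniqueness of $\lambda$.} Set $\psi(\lambda)=\log Z_\lambda$ for $\lambda\ge0$. Since $h\ge0$, we have $e^{-\lambda h}\le1$ and $h^k e^{-\lambda h}\le (k/(e\lambda))^k$ for $\lambda>0$. Dominated convergence therefore lets us differentiate under the integral on $(0,\infty)$. This gives $\psi'(\lambda)=-\mathbb E_{\pi_\lambda}h$ and $\psi''(\lambda)=\operatorname{Var}_{\pi_\lambda}h$.

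Because $h$ is nonconstant under $\bar\rho$, and $\pi_\lambda$ is mutually absolutely continuous with $\bar\rho$ (the density $e^{-\lambda h}$ is positive wherever $h<\infty$, which holds a.s.), the variance is strictly positive. Hence $g(\lambda)=\mathbb E_{\pi_\lambda}h$ is continuous and strictly decreasing on $(0,\infty)$.

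At the endpoints:
\begin{itemize}
\item As $\lambda\downarrow0$, we get $g(\lambda)\to\varepsilon_0$. This follows by dominated convergence, since $h e^{-\lambda h}\le h\in L^1(\bar\rho)$ and $Z_\lambda\to1$.
\item As $\lambda\to\infty$, we get $g(\lambda)\to h_{\min}$. For any $\delta>0$, split into the events $\{h\le h_{\min}+\delta\}$ and $\{h>h_{\min}+2\delta\}$. The first has positive $\bar\rho$-mass $q$, so the $\pi_\lambda$-mass of the second is bounded by $e^{-\lambda\delta}/q\to0$. To control $\mathbb E_{\pi_\lambda}[h\,;\,h>h_{\min}+2\delta]$, use $h e^{-\lambda h}\le$ a bound that decays like $e^{-\lambda(h_{\min}+2\delta)}$ times a constant, valid for $\lambda\ge1$.
\end{itemize}
The intermediate value theorem combined with strict monotonicity then gives a unique $\lambda>0$ with $g(\lambda)=\varepsilon$. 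I expect this limit at $\lambda\to\infty$, with its careful tail bound, to be the main technical step.

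\textbf{Minimization.} For $p\ll\bar\rho$ with $\operatorname{KL}(p\|\bar\rho)<\infty$, we have $p\ll\pi_\lambda$ and $\log(d\pi_\lambda/d\bar\rho)=-\lambda h-\log Z_\lambda$. This yields the identity
\begin{equation}
\operatorname{KL}(p\|\bar\rho)=\operatorname{KL}(p\|\pi_\lambda)-\lambda\,\mathbb E_p h-\log Z_\lambda .
\end{equation}
Before using it, we must check that $\mathbb E_p h<\infty$; this holds whenever the constraint is imposed. If instead $\operatorname{KL}(p\|\bar\rho)=\infty$, the inequality below is trivial.

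Setting $p=\pi_\lambda$ gives $\operatorname{KL}(\pi_\lambda\|\bar\rho)=-\lambda\varepsilon-\log Z_\lambda$. For any feasible $p$, the constraint $\mathbb E_p h\le\varepsilon$ together with $\lambda>0$ gives $-\lambda\mathbb E_p h\ge-\lambda\varepsilon$. Therefore
\begin{equation}
\operatorname{KL}(p\|\bar\rho)\ge \operatorname{KL}(p\|\pi_\lambda)+\operatorname{KL}(\pi_\lambda\|\bar\rho).
\end{equation}
Equality forces $\operatorname{KL}(p\|\pi_\lambda)=0$, that is, $p=\pi_\lambda$. This proves that $\pi_\lambda$ is the unique minimizer.

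The only subtlety in this part is justifying the identity when $\mathbb E_p h$ could be infinite. That case is excluded by the constraint, so the identity can be applied directly.
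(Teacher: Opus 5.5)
Your proposal is correct and follows essentially the same route as the paper: you differentiate $\log Z_\lambda$ to show that $\mathbb E_{\pi_\lambda}h$ is strictly decreasing (its derivative is $-\operatorname{Var}_{\pi_\lambda}h<0$), use its limits $\varepsilon_0$ as $\lambda\downarrow0$ and $h_{\min}$ as $\lambda\to\infty$ to get a unique $\lambda$, and then apply the identity $\operatorname{KL}(p\|\bar\rho)-\operatorname{KL}(\pi_\lambda\|\bar\rho)=\operatorname{KL}(p\|\pi_\lambda)+\lambda(\varepsilon-\mathbb E_p h)$. Your version also proves three points the paper only asserts: the two endpoint limits, strict positivity of the variance via mutual absolute continuity of $\pi_\lambda$ and $\bar\rho$, and the trivial case of infinite relative entropy.
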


\begin{proof}
For $\lambda>0$, boundedness of $h^j e^{-\lambda h}$ for $j=1,2$ justifies differentiation:
\begin{equation}
\frac{\mathrm d}{\mathrm d\lambda}\log Z_\lambda=-\mathbb E_{\pi_\lambda}h,
\qquad
\frac{\mathrm d}{\mathrm d\lambda}\mathbb E_{\pi_\lambda}h
=-\operatorname{Var}_{\pi_\lambda}(h)<0.
\label{eq:monotone}
\end{equation}
The expectation decreases continuously from $\varepsilon_0$ to $h_{\min}$, giving the unique multiplier. For every feasible $p$ with finite relative entropy,
\begin{equation}
\operatorname{KL}(p\|\bar\rho)-\operatorname{KL}(\pi_\lambda\|\bar\rho)
=\operatorname{KL}(p\|\pi_\lambda)
+\lambda(\varepsilon-\mathbb E_p h)\ge0.
\label{eq:app-b-information-projection-gap}
\end{equation}
Equality holds only for $p=\pi_\lambda$, as in the information-projection characterization~\citep{csiszar1975idivergence}.
\end{proof}

For $\varepsilon\ge\varepsilon_0$, the reference itself is optimal. The proposition explains how physics enters target definition through the selected residual score. The numerical calibration below approximates this expected-score condition for a fixed residual construction.

\subsubsection{Fourier representation of the PDE residuals}
\label{app:testspace}
\label{app:fourier-residual}

The interior part of $B$ uses a fixed set of space--time Fourier components of the governing PDE residual. Let $\mathcal F$ denote the real FFT over the spatial and physical-time axes, and let $\{x^{(\ell)}\}_{\ell=1}^{N_{\mathrm{ref}}}$ be the reference draws used when constructing the residual representation. Define the estimated spectral powers and retained frequencies by
\begin{equation}
\begin{aligned}
P(\kappa)
&=\frac{1}{N_{\mathrm{ref}}}\sum_{\ell=1}^{N_{\mathrm{ref}}}
\left|[\mathcal F\mathcal R_{\mathrm{full}}(x^{(\ell)})]_\kappa\right|^2,\\
P_{\max}&=\max_\kappa P(\kappa),
\qquad
\mathcal K=\{\kappa:P(\kappa)>P_{\max}/\mathrm{cond}\},
\end{aligned}
\label{eq:app-b-fourier-selection}
\end{equation}
where $P_{\max}>0$ and $\mathrm{cond}>1$. The selected residual coordinates are the real and imaginary parts of
\begin{equation}
c_\kappa(x)
=\frac{[\mathcal F\mathcal R_{\mathrm{full}}(x)]_\kappa}{\sqrt{P(\kappa)}},
\qquad
B\mathcal R_{\mathrm{full}}(x)
=\operatorname{stack}_{\kappa\in\mathcal K}
\begin{bmatrix}\operatorname{Re}c_\kappa(x)\\\operatorname{Im}c_\kappa(x)\end{bmatrix}.
\label{eq:app-b-fourier-coordinates}
\end{equation}
This scaling gives unit estimated second moment per retained complex coefficient and balances components with different reference amplitudes. It does not require centering or whitening the residual vector. Real and imaginary rows of a retained frequency receive a common weight. Initial-value residuals remain in their original coordinates and are combined with the spectral rows using the specified weights in $W$. The implemented common normalization makes the estimated reference mean of $h$ equal to one.

With $W=\operatorname{blkdiag}(W_{\mathrm{pde}},W_{\mathrm{ic}})$, the resulting score is
\begin{equation}
h(x)=\|B\mathcal R_{\mathrm{full}}(x)\|_{W_{\mathrm{pde}}}^2
+\|\mathcal R_{\mathrm{ic}}(x)\|_{W_{\mathrm{ic}}}^2.
\label{eq:tested_vs_physical}
\end{equation}
The calibration tolerance applies to this selected residual representation; it does not by itself bound unselected components. This agrees with the soft, selected-residual target defined in Section~\ref{sec:method:tilt}. The retained frequencies, spectral scales, and weights are part of the physical model. They must remain fixed after calibration for measurement updates to change only the Gaussian reference and preserve the residual energy.

Let $r_{\mathrm{FFT}}$ denote the number of retained real and imaginary rows. Conjugate redundancy and zero imaginary parts at self-conjugate frequencies mean that this row count need not equal a covariance rank. The distinction is used explicitly in the penalty initializer below.

\subsubsection{Penalty initialization and refinement}
\label{app:penalty-calibration}

For an affine residual $\mathcal R_{\mathrm{cal}}(x)=L_{\mathrm{cal}}x-b_{\mathrm{cal}}$ under $\mathcal N(\mu_{\mathrm{cal}},C_{\mathrm{cal}})$, define
\begin{equation}
\begin{aligned}
G_{\mathrm{cal}}&=W^{1/2}L_{\mathrm{cal}}C_{\mathrm{cal}}L_{\mathrm{cal}}^\top W^{1/2},\\
c_{\mathrm{cal}}&=W^{1/2}(L_{\mathrm{cal}}\mu_{\mathrm{cal}}-b_{\mathrm{cal}}).
\end{aligned}
\label{eq:app-b-calibration-gaussian}
\end{equation}
Let $\pi_\lambda^{\mathrm{cal}}$ be this Gaussian reweighted by $e^{-\lambda\|\mathcal R_{\mathrm{cal}}\|_W^2}$. Differentiating \eqref{eq:normconst} gives
\begin{equation}
\mathbb E_{\pi_\lambda^{\mathrm{cal}}}\|\mathcal R_{\mathrm{cal}}\|_W^2
=\operatorname{tr}\!\left[G_{\mathrm{cal}}(I+2\lambda G_{\mathrm{cal}})^{-1}\right]
+c_{\mathrm{cal}}^\top(I+2\lambda G_{\mathrm{cal}})^{-2}c_{\mathrm{cal}}.
\label{eq:restilt}
\end{equation}
This formula allows singular $G_{\mathrm{cal}}$. If $c_{\mathrm{cal}}=0$ and its $r>0$ positive eigenvalues all equal $1/r$, the reference expected score is one and the tilted expected score is $(1+2\lambda/r)^{-1}$. Thus a relative mean-square tolerance $0<\varepsilon_{\mathrm{tol}}<1$ gives
\begin{equation}
\lambda=\frac r2(\varepsilon_{\mathrm{tol}}^{-1}-1).
\label{eq:lambda-closed}
\end{equation}

This analytically solvable model provides an initializer for nonlinear residuals. The implemented rule substitutes the retained spectral row count $r_{\mathrm{FFT}}$ for $r$:
\begin{equation}
\lambda_{\mathrm{init}}
=\frac{r_{\mathrm{FFT}}}{2}(\varepsilon_{\mathrm{tol}}^{-1}-1),
\qquad
\varepsilon_{\mathrm{tol}}=\varepsilon_{\mathrm{rel}}^2,
\qquad 0<\varepsilon_{\mathrm{rel}}<1,
\label{eq:app-b-penalty-initializer}
\end{equation}
where $\varepsilon_{\mathrm{rel}}$ is the prescribed relative root-mean-square residual level. Because a row count is not generally the effective rank and the actual energy includes nonlinear and initial-value residuals, this is an initialization rather than an exact tolerance identity for the full target.

Let $\bar h$ estimate the mean selected score from a calibration chain at a current $\lambda>0$. When $0<\bar h<1$, fitting the response $(1+2\lambda/r_{\mathrm{eff}})^{-1}$ gives
\begin{equation}
r_{\mathrm{eff}}=\frac{2\lambda\bar h}{1-\bar h},
\qquad
\lambda'=\frac{r_{\mathrm{eff}}}{2}(\varepsilon_{\mathrm{tol}}^{-1}-1).
\label{eq:app-b-penalty-refinement}
\end{equation}
Here $r_{\mathrm{eff}}$ is a fitted response parameter, not a covariance-rank estimate. The current penalty is retained when the stated conditions fail. Finite-chain estimates and the response approximation make this a calibration against the requested tolerance, as stated in the main text, rather than a guarantee of exact moment matching. The penalty and residual scaling are finalized before the global augmentation and denoiser training.

\subsubsection{Static marginals and scalar mean calibration}
\label{app:volume}
\label{app:scalar-reference}
\label{app:scalar-calibration}

Physical weighting couples state and static fields and can shift the static marginal. For $\bar\rho(\mathrm dz,\mathrm du)=\bar\rho_z(\mathrm dz)\bar\rho(\mathrm du\mid z)$, integration over the state gives
\begin{equation}
\pi_z(\mathrm dz)\propto\bar\rho_z(\mathrm dz)\Phi(z),
\qquad
\Phi(z)=\mathbb E_{\bar\rho(\mathrm du\mid z)}[e^{-E(z,u)}].
\label{eq:phiu}
\end{equation}
The scalar adjustment in Section~\ref{sec:method:tilt} targets a prescribed expectation of the spatial mean while retaining this physical coupling.

Let $\rho_{\mathrm b}=\mathcal N(\mu_{\mathrm b},C_0)$ be the base before adjustment. Multiplication by $e^{-\vartheta\mathbf1^\top z}$ changes only its mean:
\begin{equation}
\begin{aligned}
\rho_{0,\vartheta}&=\mathcal N(\mu_{0,\vartheta},C_0),\\
(\mu_{0,\vartheta})_z&=(\mu_{\mathrm b})_z-\vartheta C_z\mathbf1,
\qquad
(\mu_{0,\vartheta})_u=(\mu_{\mathrm b})_u,
\end{aligned}
\label{eq:scalar-reference}
\end{equation}
where $\mathbf1\in\mathbb R^{d_z}$ is the all-ones vector. On the uniform grid used for this calibration, the spatial mean, prescribed value, and target calibration condition are
\begin{equation}
\psi(x)=d_z^{-1}\mathbf1^\top z,
\qquad
\psi_{\mathrm{cal}}=\mathbb E_{\rho_{\mathrm b}}\psi(X),
\qquad
\mathbb E_{\pi_{\varnothing,\vartheta}}\psi(X)=\psi_{\mathrm{cal}}.
\label{eq:moment-condition}
\end{equation}
Here $\pi_{\varnothing,\vartheta}$ is the physically weighted target with no measurement conditioning and base $\rho_{0,\vartheta}$. Thus the prescribed moment is selected before inference measurements arrive. For fixed residual construction and penalty, differentiation of this exponential family gives
\begin{equation}
\frac{\mathrm d}{\mathrm d\vartheta}
\mathbb E_{\pi_{\varnothing,\vartheta}}\psi(X)
=-d_z\operatorname{Var}_{\pi_{\varnothing,\vartheta}}(\psi(X))\le0.
\label{eq:app-b-static-mean-response}
\end{equation}
This monotonicity motivates a capped linear-fit or secant update using approximate target samples from Appendix~\ref{app:pcn}. The procedure targets one mean; it preserves the covariance of the Gaussian base but does not prescribe the full static marginal of the Gibbs target. Finite-chain moment estimates make the numerical adjustment approximate. Changes to residual scaling or $\lambda$ can alter the achieved mean, so calibration must be assessed under the final model. Subsequent measurements condition that model and may change its static-field mean. The main text uses $\rho_0$ and $\mu_0$ for the final adjusted base and mean.

\subsubsection{Absorbing linearized physics}
\label{app:augmented}

With calibration fixed, choose a global expansion point $p_0$ and define
\begin{equation}
\begin{aligned}
J_0&=\nabla\mathcal R(p_0),
\qquad b_0=J_0p_0-\mathcal R(p_0),\\
E_{\mathrm{lin}}(x)
&=\lambda\|\mathcal R(p_0)+J_0(x-p_0)\|_W^2
=\lambda\|J_0x-b_0\|_W^2.
\end{aligned}
\label{eq:app-b-global-linearization}
\end{equation}
For $\lambda>0$, set $V_0=(2\lambda)^{-1}W^{-1}$ and $S_0=J_0\bar C_{\mathcal C}J_0^\top+V_0$. Theorem~\ref{thm:quadratic} gives the augmented reference parameters:
\begin{equation}
\begin{aligned}
\tilde\mu_{\mathcal C}
&=\bar\mu_{\mathcal C}+\bar C_{\mathcal C}J_0^\top S_0^{-1}(b_0-J_0\bar\mu_{\mathcal C}),\\
\tilde C_{\mathcal C}
&=\bar C_{\mathcal C}-\bar C_{\mathcal C}J_0^\top S_0^{-1}J_0\bar C_{\mathcal C}
=(\bar C_{\mathcal C}^{-1}+2\lambda J_0^\top WJ_0)^{-1}.
\end{aligned}
\label{eq:augmented-app}
\end{equation}
The quadratic contribution is therefore handled by Gaussian conditioning, leaving the remaining energy for denoising. In particular, defining $Z_{\mathrm{lin}}=\mathbb E_{\bar\rho_{\mathcal C}}[e^{-E_{\mathrm{lin}}}]$ gives
\begin{equation}
\tilde\rho_{\mathcal C}(x)
=Z_{\mathrm{lin}}^{-1}\bar\rho_{\mathcal C}(x)e^{-E_{\mathrm{lin}}(x)},
\qquad
\mathbb E_{\tilde\rho_{\mathcal C}}[e^{-E_{\mathrm{cal}}(X)}]
=\frac{Z_{\mathcal C}}{Z_{\mathrm{lin}}}\in(0,\infty).
\label{eq:app-b-remaining-normalizer}
\end{equation}
Consequently, subtracting $E_{\mathrm{lin}}$ preserves the exact target in \eqref{eq:augmented}, even when $E_{\mathrm{cal}}$ is negative. If the residual is affine in the complete field, the linearization is exact, $E_{\mathrm{cal}}=0$, and the augmented reference already equals the target. At $\lambda=0$, both energies vanish and augmentation is omitted.

The global point is fitted without the measurement likelihood, using a few Gauss--Newton iterations for the calibrated base and physical energy. It need not equal a conditioned mean. The same $p_0$ is used for every measurement configuration; hence $E_{\mathrm{cal}}$ and the exact energy-weighted map $D$ remain fixed while their Gaussian inputs change.

\paragraph{Gaussian computations.}
\label{app:gaussian-numerics}
Matheron's rule is applied to the measurements and then to the virtual observation with operator $J_0$, value $b_0$, and covariance $V_0$. Solves with $S_0$ use Jacobian actions and iterative methods, or Cholesky factorization when the matrix can be assembled. Coordinate variances can be evaluated from a factorization or estimated from centered reference draws using
\begin{equation}
\operatorname{diag}\tilde C_{\mathcal C}
=\mathbb E\!\left[
(Z-\tilde\mu_{\mathcal C})\odot(Z-\tilde\mu_{\mathcal C})
\right],
\qquad
Z\sim\tilde\rho_{\mathcal C}.
\label{eq:app-b-gaussian-coordinate-variances}
\end{equation}
For each configuration, the same coordinate scales are used in the network inputs and output rescaling. Finite solves, variance estimates, and covariance approximations contribute numerical error separately from the exact identities above.
\section{Training the energy-determined conditional-mean diffusion denoiser}
\label{app:denoiser}

This appendix derives the conditional-mean denoiser, its energy-induced correction, and the target-sample regression used in Section~\ref{sec:method:denoiser}. It then describes the local Gaussian proposals and measurement configurations that generate training fields under the fixed calibrated model.

For a fixed measurement configuration $\mathcal C$, use the main-text shorthand $\tilde\mu=\tilde\mu_{\mathcal C}$ and $\tilde C=\tilde C_{\mathcal C}\succ0$. The augmented factorization is
\begin{equation}
\pi_{\mathcal C}(x)
=
\tilde Z_{\mathcal C}^{-1}\tilde\rho_{\mathcal C}(x)e^{-E_{\mathrm{cal}}(x)},
\qquad
\tilde Z_{\mathcal C}
=
\mathbb E_{X\sim\tilde\rho_{\mathcal C}}
\left[e^{-E_{\mathrm{cal}}(X)}\right],
\qquad
\tilde\rho_{\mathcal C}=\mathcal N(\tilde\mu,\tilde C).
\label{eq:app-c-augmented-factorization}
\end{equation}
The energy $E_{\mathrm{cal}}=E-E_{\mathrm{lin}}$ is fixed across configurations; measurements change the Gaussian reference. All denoising identities below use this same factorization as the main text. Local references used for pCN proposals are introduced separately in Appendix~\ref{app:gaussian-proposals} and do not redefine the denoiser.

\paragraph{Integrability of the denoising queries.}
Although $E_{\mathrm{cal}}$ can be negative, the Gaussian queries used by the method have finite normalizers and polynomial moments. Indeed, let $K_0=2\lambda J_0^\top WJ_0\succeq0$. Since $E\ge0$ and $\tilde C^{-1}=\bar C_{\mathcal C}^{-1}+K_0$, the unnormalized density $\mathcal N(x;m,\tau\tilde C)e^{-E_{\mathrm{cal}}(x)}$ is bounded above by $\mathcal N(x;m,\tau\tilde C)e^{E_{\mathrm{lin}}(x)}$, whose quadratic precision is
\begin{equation}
\tau^{-1}\tilde C^{-1}-K_0
=
\tau^{-1}\bar C_{\mathcal C}^{-1}
+(\tau^{-1}-1)K_0
\succ0,
\qquad 0<\tau\le1.
\label{eq:app-c-query-precision}
\end{equation}
The upper bound is therefore an integrable unnormalized Gaussian for every $m\in\mathbb R^d$. Positivity follows from finiteness of the residual energy almost everywhere. Thus the required moments hold for the exact query family $\Sigma=\tau\tilde C$; other Gaussian queries require their own integrability conditions. Differentiation and integration by parts use the additional regularity specified below.

\subsection{Reference preservation}
\label{app:reference-preservation}

\begin{lemma}[Reference-preserving Gaussian perturbation]
\label{lem:forward}
Let $X\sim\tilde\rho_{\mathcal C}$ and $\xi_{\mathrm G}\sim\mathcal N(0,I_d)$ be independent. For $a\in[0,1]$ and $\tau=1-a^2$, define
\begin{equation}
X_a=\tilde\mu+a(X-\tilde\mu)
+\sqrt{\tau}\,\tilde C^{1/2}\xi_{\mathrm G}.
\label{eq:app-c-reference-perturbation}
\end{equation}
Then $X_a\sim\tilde\rho_{\mathcal C}$. For $0\le a<1$, the reference conditional density is
\begin{equation}
\begin{aligned}
q_a(x\mid x_a)&=\mathcal N(x;m_a(x_a),\Sigma_a),\\
m_a(x_a)&=\tilde\mu+a(x_a-\tilde\mu),
\qquad \Sigma_a=\tau\tilde C.
\end{aligned}
\label{eq:reference_backward}
\end{equation}
At $a=0$, $X_a$ is independent of $X$; at $a=1$, $X_a=X$ and the conditional distribution is a point mass.
\end{lemma}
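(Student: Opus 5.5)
The plan is to treat the pair $(X,X_a)$ as jointly Gaussian and read off both claims from its first two moments. The moments follow directly from the independence of $X\sim\tilde\rho_{\mathcal C}=\mathcal N(\tilde\mu,\tilde C)$ and $\xi_{\mathrm G}$. Since $X_a-\tilde\mu=a(X-\tilde\mu)+\sqrt{\tau}\,\tilde C^{1/2}\xi_{\mathrm G}$ is an affine function of the jointly Gaussian vector $(X,\xi_{\mathrm G})$, the pair $(X,X_a)$ is jointly Gaussian.

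For the marginal, I compute $\mathbb E[X_a]=\tilde\mu$. Using independence and symmetry of $\tilde C^{1/2}$,
\begin{equation*}
\operatorname{Cov}(X_a)=a^2\tilde C+\tau\,\tilde C^{1/2}I_d\tilde C^{1/2}=(a^2+1-a^2)\tilde C=\tilde C,
\end{equation*}
so $X_a\sim\tilde\rho_{\mathcal C}$ for every $a\in[0,1]$. For the cross-covariance, $\operatorname{Cov}(X,X_a)=a\tilde C$, since the noise term is independent of $X$.

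For the conditional with $0\le a<1$, I apply standard Gaussian conditioning (the same formula as \eqref{eq:conditioning}) to the joint law of $(X,X_a)$, whose covariance blocks are $\tilde C$, $a\tilde C$, $a\tilde C$, $\tilde C$. Because $\tilde C\succ0$, the conditioning matrix $\operatorname{Cov}(X_a)=\tilde C$ is invertible. The conditional mean is
\begin{equation*}
\tilde\mu+a\tilde C\tilde C^{-1}(x_a-\tilde\mu)=m_a(x_a),
\end{equation*}
and the conditional covariance is $\tilde C-a^2\tilde C\tilde C^{-1}\tilde C=(1-a^2)\tilde C=\Sigma_a$. Since $\tau>0$ when $a<1$, this covariance is positive definite, so $q_a(\cdot\mid x_a)$ is a genuine Gaussian density. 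Alternatively, Bayes' rule gives the same result by completing the square in $\mathcal N(x;\tilde\mu,\tilde C)\,\mathcal N(x_a;\tilde\mu+a(x-\tilde\mu),\tau\tilde C)$, as in Theorem~\ref{thm:quadratic}.

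The endpoint cases are immediate. At $a=0$, $X_a=\tilde\mu+\tilde C^{1/2}\xi_{\mathrm G}$ is a function of $\xi_{\mathrm G}$ alone and hence independent of $X$, consistent with $m_0=\tilde\mu$ and $\Sigma_0=\tilde C$. At $a=1$, $\tau=0$ gives $X_a=X$, so the conditional law is the point mass $\delta_{x_a}$.

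There is no real obstacle here. The only point needing care is invertibility: I will note explicitly that $\tilde C\succ0$ (from \eqref{eq:augmented-app} and $\bar C_{\mathcal C}\succ0$) justifies conditioning. I will also note that $a<1$ is exactly what makes $\Sigma_a$ nondegenerate, which is why the density statement excludes $a=1$.
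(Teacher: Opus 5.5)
Your proposal is correct and takes the same approach as the paper: it observes that $(X,X_a)$ is jointly Gaussian, computes $\mathbb E[X_a]=\tilde\mu$, $\operatorname{Cov}(X_a)=a^2\tilde C+\tau\tilde C=\tilde C$ and $\operatorname{Cov}(X,X_a)=a\tilde C$, applies Gaussian conditioning, and reads the endpoint cases directly from the definition of $X_a$. Your explicit remarks on the invertibility of $\tilde C$ and on why $a<1$ makes $\Sigma_a$ nondegenerate spell out details the paper leaves implicit.
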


\begin{proof}
The pair $(X,X_a)$ is jointly Gaussian, with
\begin{equation}
\mathbb E[X_a]=\tilde\mu,
\qquad
\operatorname{Cov}(X_a)=a^2\tilde C+\tau\tilde C=\tilde C,
\qquad
\operatorname{Cov}(X,X_a)=a\tilde C.
\label{eq:app-c-reference-moments}
\end{equation}
Gaussian conditioning gives \eqref{eq:reference_backward}; the endpoints follow directly from \eqref{eq:app-c-reference-perturbation}.
\end{proof}

The reference conditional mean $m_a$ is the analytic Gaussian denoising estimate. When $X$ instead follows $\pi_{\mathcal C}$, the same perturbation connects the Gaussian reference at $a=0$ to the Gibbs target at $a=1$. The next result identifies the correction to $m_a$ required by this target.

\subsection{Proof of Theorem~\ref{thm:denoiser}}
\label{app:denoiser-proof}

For a Gaussian query with the required integrability, define its energy normalizer and tilted law by
\begin{equation}
\begin{aligned}
\Phi(m,\Sigma)
&=\mathbb E_{X\sim\mathcal N(m,\Sigma)}
\left[e^{-E_{\mathrm{cal}}(X)}\right],\\
P_{m,\Sigma}(\mathrm dx)
&=\Phi(m,\Sigma)^{-1}
\mathcal N(x;m,\Sigma)e^{-E_{\mathrm{cal}}(x)}\,\mathrm dx.
\end{aligned}
\label{eq:app-c-weighted-query}
\end{equation}
Its mean is $D(m,\Sigma)$ from \eqref{eq:denoiser}. For later use, write the mean correction and its normalization as
\begin{equation}
\Delta(m,\Sigma)=D(m,\Sigma)-m,
\qquad
\widetilde\Delta(m,\Sigma;\tau)=\frac{\Delta(m,\Sigma)}{\tau},
\qquad \Sigma=\tau\tilde C,\quad\tau>0.
\label{eq:app-c-exact-corrections}
\end{equation}
The multiplier $\tau$ remains explicit because $\tilde C$ varies with the measurement configuration.

\begin{proof}[Proof of Theorem~\ref{thm:denoiser}]
Fix $\mathcal C$ and $0\le a<1$. Let $k_a(x_a\mid x)$ denote the Gaussian perturbation density, and let $p_a$ be its marginal when $X\sim\pi_{\mathcal C}$. Bayes' rule and Lemma~\ref{lem:forward} give
\begin{equation}
\begin{aligned}
p(x\mid X_a=x_a)
&\propto\tilde\rho_{\mathcal C}(x)k_a(x_a\mid x)
 e^{-E_{\mathrm{cal}}(x)}\\
&\propto q_a(x\mid x_a)e^{-E_{\mathrm{cal}}(x)}.
\end{aligned}
\label{eq:app-c-target-conditional}
\end{equation}
The target conditional law is therefore $P_{m_a(x_a),\Sigma_a}$. Its mean is $D(m_a(x_a),\Sigma_a)$, proving \eqref{eq:conditional-denoiser}.
\end{proof}

\paragraph{Energy-induced mean correction.}
The conditional-mean identity requires no derivatives of the energy. To obtain the gradient representation, additionally assume that $E_{\mathrm{cal}}$ is continuously differentiable, its gradient is integrable under the tilted query, and the density derivatives are integrable with vanishing boundary terms. Integrating the gradient of the unnormalized density in \eqref{eq:app-c-weighted-query} gives
\begin{equation}
0
=
\mathbb E_{P_{m,\Sigma}}
\left[-\Sigma^{-1}(X-m)-\nabla E_{\mathrm{cal}}(X)\right].
\label{eq:app-c-integration-by-parts}
\end{equation}
Consequently,
\begin{equation}
\Delta(m,\Sigma)
=-\Sigma\,
\mathbb E_{P_{m,\Sigma}}
\left[\nabla E_{\mathrm{cal}}(X)\right].
\label{eq:denoiser_stein_identity}
\end{equation}
For $(m,\Sigma)=(m_a,\Sigma_a)$, this is exactly \eqref{eq:energy-induced-correction}. It expresses the denoising correction as a covariance-scaled conditional average of the remaining energy gradient. The PDE enters this identity through
\begin{equation}
\begin{aligned}
\nabla E_{\mathrm{cal}}(x)
&=\nabla E(x)-\nabla E_{\mathrm{lin}}(x)\\
&=2\lambda J(x)^\top W\mathcal R(x)
-2\lambda J_0^\top W
\left[\mathcal R(p_0)+J_0(x-p_0)\right],
\qquad J(x)=\nabla\mathcal R(x).
\end{aligned}
\label{eq:app-c-energy-gradient}
\end{equation}
The quadratic contribution is already represented analytically in $\tilde\rho_{\mathcal C}$; the denoiser learns the additional mean correction induced by the remainder.

\paragraph{Score identity.}
Reference preservation also factorizes the perturbed target density. When differentiation with respect to $m$ is justified by local domination,
\begin{equation}
\begin{aligned}
p_a(x_a)
&=\frac{\tilde\rho_{\mathcal C}(x_a)}{\tilde Z_{\mathcal C}}
\Phi(m_a(x_a),\Sigma_a),\\
\nabla_m\log\Phi(m,\Sigma)
&=\Sigma^{-1}\Delta(m,\Sigma).
\end{aligned}
\label{eq:app-c-perturbed-density}
\end{equation}
Using $\nabla_{x_a}m_a=aI_d$ and $\Sigma_a=\tau\tilde C$ yields
\begin{equation}
\begin{aligned}
\nabla_{x_a}\log p_a(x_a)
&=-\tilde C^{-1}(x_a-\tilde\mu)
+a\Sigma_a^{-1}\Delta(m_a,\Sigma_a)\\
&=-\tilde C^{-1}(x_a-\tilde\mu)
+a\tilde C^{-1}\widetilde\Delta(m_a,\Sigma_a;\tau).
\end{aligned}
\label{eq:score}
\end{equation}
Here $m_a=m_a(x_a)$. The first term is the Gaussian reference score, and the second is the energy-induced correction. This decomposition supplies the cancellation in \eqref{eq:exact-flow-velocity}.

\paragraph{Small-noise limit.}
Suppose $E_{\mathrm{cal}}$ is continuously differentiable near $m$, the tilted queries $P_{m,\tau\tilde C}$ concentrate at $m$, and their energy gradients are uniformly integrable as $\tau\downarrow0$. Then the gradient expectation in \eqref{eq:denoiser_stein_identity} converges to $\nabla E_{\mathrm{cal}}(m)$, giving
\begin{equation}
\begin{aligned}
\Delta(m,\tau\tilde C)
&=-\tau\tilde C\nabla E_{\mathrm{cal}}(m)+o(\tau),\\
\widetilde\Delta(m,\tau\tilde C;\tau)
&\longrightarrow-\tilde C\nabla E_{\mathrm{cal}}(m).
\end{aligned}
\label{eq:denoiser_small_noise}
\end{equation}
For the moving mean $m_a(x)=\tilde\mu+a(x-\tilde\mu)$ used in sampling, the same conclusion at $x$ follows when $P_{m_a(x),(1-a^2)\tilde C}$ concentrates at $x$ and its energy gradients are uniformly integrable as $a\uparrow1$. Under these conditions, \eqref{eq:score} approaches the Gibbs target score. Appendix~\ref{app:probability-flow} uses this limit to interpret the terminal sampling velocity.

\subsection{Target draws as unbiased regression labels}
\label{app:unbiased}

\begin{lemma}[Single-draw labels]
\label{lem:unbiased}
Fix a measurement configuration $\mathcal C$ and $0\le a<1$. Let $X\sim\pi_{\mathcal C}$ and construct $X_a$ using \eqref{eq:forward}. With $s_a=\sqrt{\operatorname{diag}\Sigma_a}$,
\begin{equation}
\begin{aligned}
\mathbb E\!\left[
\frac{X-m_a(X_a)}{\tau}\,\middle|\,X_a
\right]
&=\widetilde\Delta(m_a(X_a),\Sigma_a;\tau),\\
\mathbb E\!\left[
\frac{X-m_a(X_a)}{s_a}\,\middle|\,X_a
\right]
&=\frac{\Delta(m_a(X_a),\Sigma_a)}{s_a},
\end{aligned}
\label{eq:unbiased}
\end{equation}
where division by $s_a$ is elementwise.
\end{lemma}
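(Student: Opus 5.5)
The argument is a direct corollary of Theorem~\ref{thm:denoiser}, combined with the fact that $m_a(X_a)$, $\tau$, and $s_a$ are all measurable functions of $X_a$, or constants given the fixed configuration and signal coefficient. First, I would check that the conditional expectations exist. By \eqref{eq:app-a-target-integrability}, $X$ has finite first moments under $\pi_{\mathcal C}$. The perturbed field $X_a$ in \eqref{eq:forward} is an affine function of $X$ and an independent Gaussian, so it also has finite first moments. Hence $m_a(X_a)=\tilde\mu+a(X_a-\tilde\mu)$ is integrable, and $\mathbb E[X-m_a(X_a)\mid X_a]$ is well defined.

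Next, I would invoke Theorem~\ref{thm:denoiser}, which gives $\mathbb E[X\mid X_a]=D(m_a(X_a),\Sigma_a)$. Because $m_a(X_a)$ is $\sigma(X_a)$-measurable, pulling it out of the conditional expectation yields
\begin{equation*}
\mathbb E[X-m_a(X_a)\mid X_a]=D(m_a(X_a),\Sigma_a)-m_a(X_a)=\Delta(m_a(X_a),\Sigma_a),
\end{equation*}
using the definition of $\Delta$ in \eqref{eq:app-c-exact-corrections}. Both displayed identities then follow from linearity of conditional expectation under a deterministic rescaling. For the first, $\tau=1-a^2>0$ because $a<1$, so dividing by $\tau$ gives $\widetilde\Delta(m_a(X_a),\Sigma_a;\tau)$ with $\Sigma_a=\tau\tilde C$, matching \eqref{eq:app-c-exact-corrections}. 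For the second, $s_a=\sqrt{\operatorname{diag}(\tau\tilde C)}$ has strictly positive entries, since $\tilde C\succ0$ gives a positive diagonal and $\tau>0$. Division by $s_a$ is therefore coordinatewise multiplication by the constant diagonal matrix $\operatorname{diag}(s_a)^{-1}$, which commutes with conditional expectation.

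There is no real obstacle in this argument; the only points needing care are bookkeeping ones. The first is that $\Sigma_a$ must be the exact query covariance $\tau\tilde C$, so that the integrability established for the query family via \eqref{eq:app-c-query-precision} makes $D$ finite. The second is that the case $a=0$ is covered: there $\tau=1$ and $m_a=\tilde\mu$ are constants, and the conditional expectation reduces to $\mathbb E[X]-\tilde\mu=D(\tilde\mu,\tilde C)-\tilde\mu$, which is consistent with the theorem. No derivatives or integration by parts are needed, because the gradient form \eqref{eq:energy-induced-correction} plays no role here.
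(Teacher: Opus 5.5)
Your proposal is correct and takes the same route as the paper: subtract the $\sigma(X_a)$-measurable $m_a(X_a)$ from the conditional-mean identity of Theorem~\ref{thm:denoiser}, then divide by the fixed $\tau$ or $s_a$. Your added checks are sound bookkeeping that the paper leaves implicit: integrability of $X$ and $X_a$, strict positivity of $\tau$ and of the entries of $s_a$, and the $a=0$ case.
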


\begin{proof}
Subtract $m_a(X_a)$ from the conditional-mean identity in Theorem~\ref{thm:denoiser} and divide by the fixed $\tau$ or $s_a$. For random configurations and signal coefficients, both are included in the conditioning.
\end{proof}

For square-integrable labels, squared-error regression learns their conditional mean. The identities in \eqref{eq:unbiased} therefore turn target draws into supervised labels for the standardized energy-induced correction, without evaluating either the energy-weighted Gaussian expectation or the conditional energy-gradient expectation. The network output is rescaled and normalized in \eqref{eq:denoiser-displacement} for use as a sampling velocity. Appendix~\ref{app:covariance-encoding} characterizes this regression when one network is trained across configurations with compressed covariance inputs.

\paragraph{Why standardize the labels?}
Let $T_a=\operatorname{diag}(s_a)$. The standardized label has conditional covariance
\begin{equation}
\operatorname{Cov}\!\left(
\frac{X-m_a}{s_a}\,\middle|\,X_a
\right)
=
T_a^{-1}\operatorname{Cov}_{P_{m_a,\Sigma_a}}(X)T_a^{-1}.
\label{eq:app-c-standardized-label-covariance}
\end{equation}
This scaling accounts for coordinate-dependent uncertainty. As an analytic case, suppose $E_{\mathrm{cal}}(x)=\tfrac12x^\top Hx+b^\top x+c$, with fixed $H\succeq0$. Completing the square gives $\Sigma_a^\star=(\Sigma_a^{-1}+H)^{-1}$, and hence
\begin{equation}
\begin{aligned}
\operatorname{Cov}\!\left(
\frac{X-m_a}{\tau}\,\middle|\,X_a
\right)
&=\frac{\Sigma_a^\star}{\tau^2}
=\frac{\tilde C}{\tau}+O(1)
\quad(\tau\downarrow0),\\
\operatorname{Cov}\!\left(
\frac{X-m_a}{s_a}\,\middle|\,X_a
\right)
&=T_a^{-1}\Sigma_a^\star T_a^{-1}
\preceq T_a^{-1}\Sigma_aT_a^{-1}.
\end{aligned}
\label{eq:label_variance_affine}
\end{equation}
The upper bound is the correlation matrix of $\tilde C$, so each standardized coordinate variance is at most one in this quadratic case. For a residual affine in the complete field, exact augmentation gives $E_{\mathrm{cal}}=0$ and attains the bound. For nonlinear remaining energies, a conditional covariance of $\tau\tilde C+o(\tau)$ likewise gives a finite standardized small-noise limit, subject to the required moment convergence. Standardization controls coordinate scales; it does not decorrelate the labels.

The identities concern exact target marginals. Correlation among stationary chain samples affects effective sample size, whereas incomplete equilibration and selection of retained pools can also change the regression target. The pCN procedure below therefore supplies approximate training draws, consistently with Section~\ref{sec:method:denoiser}.

\subsection{Local Gaussian references for training}
\label{app:gaussian-proposals}

A local quadratic approximation provides a Gaussian proposal reference while the exact residual energy remains in the target density. For a fixed local expansion point $p\in\mathbb R^d$, define
\begin{equation}
J_p=\nabla\mathcal R(p),
\qquad
b_J=J_pp-\mathcal R(p),
\qquad
E_{\mathrm{lin}}^{(p)}(x)=\lambda\|J_px-b_J\|_W^2.
\label{eq:app-c-local-linearization}
\end{equation}
For $\lambda>0$ and $\Sigma\succ0$, Theorem~\ref{thm:quadratic} gives the Gaussian tilt $Q_J=\mathcal N(m^\star,\Sigma^\star)$ of $\mathcal N(m,\Sigma)$ by this energy:
\begin{equation}
\begin{aligned}
S_J&=J_p\Sigma J_p^\top+(2\lambda)^{-1}W^{-1},\\
m^\star&=m+\Sigma J_p^\top S_J^{-1}(b_J-J_pm),\\
\Sigma^\star&=(\Sigma^{-1}+2\lambda J_p^\top WJ_p)^{-1}.
\end{aligned}
\label{eq:gnproposal}
\end{equation}
Training chains use $(m,\Sigma)=(\bar\mu_{\mathcal C},\bar C_{\mathcal C})$, so $Q_J\propto\bar\rho_{\mathcal C}e^{-E_{\mathrm{lin}}^{(p)}}$. The local point $p$ can depend on the training configuration; the global point $p_0$ and the denoising energy $E_{\mathrm{cal}}$ remain fixed. Thus $Q_J$ defines chain proposals, whereas $\tilde\rho_{\mathcal C}$ defines the Gaussian perturbation used for training and sampling. The two quadratic approximations are alternative reference factorizations and are not accumulated.

Matheron's rule generates a draw from $Q_J$ using independent $X_{\mathrm p}\sim\mathcal N(m,\Sigma)$ and $\varepsilon_{\mathrm p}\sim\mathcal N(0,(2\lambda)^{-1}W^{-1})$:
\begin{equation}
X^\star=X_{\mathrm p}+\Sigma J_p^\top S_J^{-1}
(b_J-J_pX_{\mathrm p}-\varepsilon_{\mathrm p}),
\qquad X^\star\sim Q_J.
\label{eq:proposal_matheron}
\end{equation}
At $\lambda=0$, the quadratic update is omitted and $Q_J=\mathcal N(m,\Sigma)$. Covariance products, Jacobian actions, and linear solves follow Appendix~\ref{app:gaussian-numerics}, with $J_p$ replacing $J_0$. The exact proposal formulas assume consistent Gaussian draws and solves; numerical approximations in these operations contribute to training-chain error.

\subsection{pCN sampling of the Gibbs target}
\label{app:pcn}

\paragraph{Density-ratio correction.}
Dividing the target density by the local Gaussian reference cancels the measurement-conditioned Gaussian factor:
\begin{equation}
\frac{\pi_{\mathcal C}(x)}{Q_J(x)}
\propto e^{-\delta_{\mathrm{def}}(x)},
\qquad
\delta_{\mathrm{def}}(x)=E(x)-E_{\mathrm{lin}}^{(p)}(x).
\label{eq:app-c-proposal-defect}
\end{equation}
In terms of the globally augmented factorization, the same defect is
\begin{equation}
\delta_{\mathrm{def}}
=E_{\mathrm{cal}}-
\left(E_{\mathrm{lin}}^{(p)}-E_{\mathrm{lin}}\right).
\label{eq:app-c-defect-equivalence}
\end{equation}
The correction therefore retains the full selected residual energy defining $\pi_{\mathcal C}$, even though the proposal uses a local quadratic approximation. Evaluating the energy requires residual evaluation on a candidate complete field, rather than a forward PDE solution for each candidate parameter field.

\paragraph{The pCN transition.}
Write $Q_J=\mathcal N(m_Q,C_Q)$, with $(m_Q,C_Q)=(m^\star,\Sigma^\star)$ from \eqref{eq:gnproposal}. For a fixed step size $b_{\mathrm{pcn}}\in(0,1]$ and an independent innovation $\zeta\sim\mathcal N(0,C_Q)$, propose and accept according to
\begin{equation}
\begin{aligned}
x'&=m_Q+\sqrt{1-b_{\mathrm{pcn}}^2}\,(x-m_Q)+b_{\mathrm{pcn}}\zeta,\\
\mathbb P(\mathrm{accept})
&=\min\left\{1,
\exp[-\delta_{\mathrm{def}}(x')+\delta_{\mathrm{def}}(x)]
\right\}.
\end{aligned}
\label{eq:pcn}
\end{equation}
An innovation is obtained by centering a draw from $Q_J$, using \eqref{eq:proposal_matheron} when $\lambda>0$. With exact Gaussian draws and fixed $p$ and $b_{\mathrm{pcn}}$, the proposal is reversible with respect to $Q_J$. Combining that reversibility with \eqref{eq:app-c-proposal-defect} yields detailed balance for $\pi_{\mathcal C}$~\citep{cotter2013mcmc}. Rejected proposals retain the current state. Under the additional ergodicity conditions for the chain, long-run averages converge to target expectations.

\paragraph{Adaptation and retained training fields.}
Gauss--Newton iterations initialize the local point $p$. During burn-in, the implementation adapts the step size and may recenter $p$ on the running walker mean when this decreases the empirical defect variance. Current-state defects are recomputed after each reference change. Both $p$ and the step size are fixed before retaining fields. An initial segment is discarded, and each walker's current state is recorded at fixed intervals, including repeated states after rejections.

This separation allows adaptation to improve the proposal while retaining a fixed target-preserving kernel for subsequent sampling. Finite-chain output is nevertheless approximate: freezing the kernel does not establish equilibration, and thinning does not guarantee independence. For affine residuals, the defect vanishes and every proposal is accepted; consecutive states become independent only at $b_{\mathrm{pcn}}=1$.

The training pipeline also screens finite pools using residual and chain summaries and can redraw configurations. These retention decisions can change the effective training distribution and do not establish equilibration or the moment assumptions required for regression. The exact label identities in Appendix~\ref{app:unbiased} describe the target population; finite-chain, numerical, and retention effects contribute to its practical approximation.

\subsection{Covariance encoding and the regression target}
\label{app:covariance-encoding}

With $E_{\mathrm{cal}}$ fixed, the exact map $D(m,\Sigma)$ is shared across measurement configurations. The full covariance still enters Gaussian conditioning, reference sampling, and perturbation generation. For the network input, PhysDEM uses the coordinate scales $s=\sqrt{\operatorname{diag}\Sigma}$ together with $m$, $a$, and the fixed grid coordinates. This provides a field-sized representation whose dimension does not grow with the number of measurements, while leaving the interpretation of its regression target explicit.

For random training configurations and signal coefficients, let
\begin{equation}
L_{\mathrm{reg}}=\frac{X-m_a}{s_a},
\qquad
F=(m_a,s_a,a),
\qquad
T_{\mathrm{reg}}=\frac{\Delta(m_a,\Sigma_a)}{s_a}.
\label{eq:app-c-encoded-variables}
\end{equation}
Assume exact conditional target draws, the same fixed energy, and square-integrable labels. Lemma~\ref{lem:unbiased} gives $T_{\mathrm{reg}}$ as the conditional label mean given the full configuration, $a$, and $X_a$. Since $F$ is determined by those quantities, the tower property gives the population optimum for the encoded inputs:
\begin{equation}
n^\star(F)
=\mathbb E[L_{\mathrm{reg}}\mid F]
=\mathbb E[T_{\mathrm{reg}}\mid F].
\label{eq:app-c-encoded-regression}
\end{equation}
Thus training learns the best squared-error predictor of the standardized correction from the supplied inputs. More explicitly, for any square-integrable predictor $n(F)$, conditional orthogonality gives
\begin{equation}
\begin{aligned}
\mathbb E\|L_{\mathrm{reg}}-n(F)\|^2
={}&\mathbb E\|L_{\mathrm{reg}}-T_{\mathrm{reg}}\|^2\\
&+\mathbb E\|T_{\mathrm{reg}}-n^\star(F)\|^2
+\mathbb E\|n^\star(F)-n(F)\|^2.
\end{aligned}
\label{eq:app-c-regression-decomposition}
\end{equation}
The terms separate label variability, covariance-encoding error, and predictor approximation error under the training law.

\paragraph{When the encoding is sufficient.}
The encoding is sufficient for the correction whenever $T_{\mathrm{reg}}$ is a measurable function of $F$. A sufficient condition is that $\Sigma_a$ can be recovered from $(m_a,s_a,a)$ on the configuration family of interest. For example, if the queries share a known correlation matrix $K$, then
\begin{equation}
\Sigma_a=\operatorname{diag}(s_a)\,K\,\operatorname{diag}(s_a),
\label{eq:app-c-encoding-sufficiency}
\end{equation}
and the coordinate scales determine the full covariance. Diagonal covariance is the special case $K=I_d$. In these cases the middle term in \eqref{eq:app-c-regression-decomposition} vanishes, and the population predictor matches the exact standardized correction configuration by configuration.

For more general configurations, the diagonal encoding approximates the dependence on covariance correlations. The conditional projection in \eqref{eq:app-c-encoded-regression} characterizes its population prediction, and \eqref{eq:app-c-regression-decomposition} identifies any remaining encoding error. Reuse without retraining means that new measurements update the Gaussian inputs of a shared network; its accuracy depends on the adequacy of this encoding and generalization to those inputs, as stated in Sections~\ref{sec:method:denoiser} and~\ref{sec:method:sample}.

\subsection{Training configurations}
\label{app:configs}

The configuration generator first selects a pseudo-field from an approximate chain pool for the calibrated target without measurement conditioning:
\begin{equation}
\pi_{\varnothing}(x)
=Z_{\varnothing}^{-1}\rho_0(x)e^{-E(x)}.
\label{eq:app-c-unconditioned-target}
\end{equation}
The implementation samples a measurement-group count from a fixed set including zero, selects spatial grid locations, and retains a random fraction of time levels. Measurements are simulated from the pseudo-field through \eqref{eq:obs}, with the specified noise covariances; measurements of the static field $z$ may also be included. The resulting $\mathcal C$ contains both the observation setup and the simulated values.

For each configuration, the measurement-conditioned reference is constructed and the pCN procedure in Appendix~\ref{app:pcn} supplies a pool of approximate conditional target samples. Retained fields are perturbed using the globally augmented reference $\tilde\rho_{\mathcal C}$. Independent draws of $a$ and $\xi_{\mathrm G}$ in \eqref{eq:forward} produce the regression labels in \eqref{eq:loss}, and the pool is refreshed after a prescribed number of training steps. All complete training fields and measurement values are generated under the fixed physical model. Training therefore requires neither a preassembled full-field dataset from external simulators or real-world measurements nor the measurements later supplied at inference.

\section{Sampling with the physics-displacement probability flow}
\label{app:sampler}

This appendix derives the sampler in Section~\ref{sec:method:sample}, establishes how measurement updates enter the fixed model, and gives the expected coverage result used to interpret generated field ensembles.

\subsection{Probability flow and numerical sampling}
\label{app:probability-flow}

Fix a measurement configuration $\mathcal C$ and use the notation of the main text: $\tilde\rho_{\mathcal C}=\mathcal N(\tilde\mu,\tilde C)$, $\tilde C\succ0$, and $\pi_{\mathcal C}\propto\tilde\rho_{\mathcal C}e^{-E_{\mathrm{cal}}}$. The calibrated model and global expansion point $p_0$ are fixed. The denoiser $D$ in \eqref{eq:denoiser} is defined by this same remaining energy $E_{\mathrm{cal}}$.

\paragraph{Reference-preserving diffusion and Gaussian cancellation.}
Introduce an auxiliary time $v\in[0,v_{\mathrm{end}})$ and a continuously differentiable schedule $a_v$ with $a_0=1$, $a_v>0$, $\mathrm da_v/\mathrm dv<0$ in the interior, and $a_v\to0$ as $v\to v_{\mathrm{end}}$. Let $B_v$ be standard $d$-dimensional Brownian motion. Define
\begin{equation}
\beta_v=-2\frac{\mathrm d}{\mathrm dv}\log a_v>0,
\qquad
\mathrm dZ_v
=
-\frac{\beta_v}{2}(Z_v-\tilde\mu)\,\mathrm dv
+\sqrt{\beta_v}\,\tilde C^{1/2}\mathrm dB_v,
\qquad
Z_0\sim\pi_{\mathcal C}.
\label{eq:app-d-forward-sde}
\end{equation}
For locally integrable $\beta_v$, the solution has the Gaussian-perturbation marginals
\begin{equation}
a_v=\exp\!\left(-\frac12\int_0^v\beta_r\,\mathrm dr\right),
\qquad
Z_v\overset{\mathrm d}{=}
\tilde\mu+a_v(X-\tilde\mu)
+\sqrt{1-a_v^2}\,\tilde C^{1/2}\xi_{\mathrm G},
\label{eq:app-d-forward-marginals}
\end{equation}
where $X\sim\pi_{\mathcal C}$ and $\xi_{\mathrm G}\sim\mathcal N(0,I_d)$ are independent. The limit $a_v\to0$ corresponds to an infinite integrated noise rate; $v_{\mathrm{end}}$ may be infinite. If initialized from $\tilde\rho_{\mathcal C}$, this diffusion preserves that reference, as established in Lemma~\ref{lem:forward}.

Let $p_a$ be the density of the perturbation in \eqref{eq:forward}, and write $m(a,x)=\tilde\mu+a(x-\tilde\mu)$ and $\Sigma_a=(1-a^2)\tilde C$. The score identity \eqref{eq:score} separates the Gaussian reference score from the energy-induced correction:
\begin{equation}
\nabla_x\log p_a(x)
=
-\tilde C^{-1}(x-\tilde\mu)
+
\frac{a}{1-a^2}\tilde C^{-1}
\bigl[D(m(a,x),\Sigma_a)-m(a,x)\bigr].
\label{eq:app-d-score-decomposition}
\end{equation}
The probability-flow ODE associated with \eqref{eq:app-d-forward-sde} has the same marginal densities under the usual regularity and uniqueness conditions~\citep{song2021score}. Substituting \eqref{eq:app-d-score-decomposition} gives
\begin{equation}
\begin{aligned}
\frac{\mathrm dX}{\mathrm dv}
&=
-\frac{\beta_v}{2}
\left[(X-\tilde\mu)+\tilde C\nabla_x\log p_{a_v}(X)\right]
\\
&=
-\frac{\beta_va_v}{2}
\frac{D(m(a_v,X),\Sigma_{a_v})-m(a_v,X)}{1-a_v^2}.
\end{aligned}
\label{eq:app-d-flow-cancellation}
\end{equation}
The linear drift cancels the Gaussian score contribution exactly. Since $\mathrm da_v/\mathrm dv=-\beta_va_v/2$, changing to the signal coefficient yields
\begin{equation}
\begin{aligned}
v_a(x)
&=
\frac{x-\tilde\mu+\tilde C\nabla_x\log p_a(x)}{a}
=
\frac{\tilde C}{a}\nabla_x\log\!\left[
\frac{p_a(x)}{\tilde\rho_{\mathcal C}(x)}
\right]
\\
&=
\frac{D(m(a,x),\Sigma_a)-m(a,x)}{1-a^2}
=
-\tilde C\,
\mathbb E\!\left[
\nabla E_{\mathrm{cal}}(X)\mid X_a=x
\right],
\qquad 0<a<1.
\end{aligned}
\label{eq:app-d-exact-velocity}
\end{equation}
The last equality uses the integration-by-parts conditions in Appendix~\ref{app:denoiser-proof}. The mean-correction formula itself follows from the conditional-mean and score identities. Thus the quantity learned by energy matching, after restoring its coordinate scale and dividing by $1-a^2$, is precisely the velocity required for sampling. Gaussian conditioning still enters through $\tilde\mu$, $\tilde C$, and initialization; the cancellation isolates the correction that the network must supply.

\paragraph{Exact transport and endpoints.}
Sampling follows increasing $a$ from the Gaussian reference at $a=0$. For every $0<b<1$, assume the exact velocity extends continuously to $a=0$, generates a unique non-explosive flow on $[0,b]$, and its continuity equation has a unique probability-law solution initialized at $\tilde\rho_{\mathcal C}$. The flow then has marginals $p_a$, because these marginals satisfy the same continuity equation and initial condition. As $a\uparrow1$, \eqref{eq:forward} converges to the target field, so the flow distributions converge to $\pi_{\mathcal C}$. If a terminal flow map exists, it transports $\tilde\rho_{\mathcal C}$ to $\pi_{\mathcal C}$.

At $a=0$, the exact mean-correction velocity is constant. At the other endpoint, the moving-query concentration and gradient-integrability conditions in Appendix~\ref{app:denoiser-proof} give
\begin{equation}
\begin{aligned}
v_0(x)
&=
D(\tilde\mu,\tilde C)-\tilde\mu
=
\mathbb E_{\pi_{\mathcal C}}[X]-\tilde\mu,
\\
\lim_{a\uparrow1}v_a(x)
&=
-\tilde C\,\nabla E_{\mathrm{cal}}(x).
\end{aligned}
\label{eq:app-d-endpoint-velocities}
\end{equation}
These expressions describe the exact flow. The practical sampler below evaluates its learned approximation on a truncated interval.

The affine case makes the role of reference augmentation particularly transparent. If the full residual is affine, $\mathcal R(x)=Lx-b$, its linearization at any $p_0$ is exact. Hence
\begin{equation}
E_{\mathrm{lin}}=E,
\qquad
E_{\mathrm{cal}}=0,
\qquad
\tilde\rho_{\mathcal C}=\pi_{\mathcal C},
\qquad
v_a(x)=0.
\label{eq:app-d-affine-exactness}
\end{equation}
In this case, Gaussian conditioning already produces target draws. For nonlinear residuals, the learned flow supplies the remaining correction while preserving the target decomposition.

\paragraph{Learned velocity and numerical integration.}
The implementation uses the saved exponential moving average of the denoiser parameters, denoted by $\theta$, and draws $x_0$ from the augmented reference using the Gaussian operations in Appendix~\ref{app:gaussian-numerics}. It evaluates
\begin{equation}
v_{\theta,a}(x)
=
\widetilde\Delta_\theta(m(a,x),s(a),a)
=
\frac{s(a)\odot n_\theta(m(a,x),s(a),a)}{1-a^2},
\qquad
s(a)=\sqrt{(1-a^2)\operatorname{diag}\tilde C}.
\label{eq:app-d-learned-velocity}
\end{equation}
For an integer $N\ge1$, set $\delta a=a_{\max}/N$ and $a_j=j\,\delta a$. Second-order Heun steps update the complete field for $j=0,\ldots,N-1$:
\begin{equation}
\begin{aligned}
k_1&=v_{\theta,a_j}(x_j),
&
x_{\mathrm{pred}}&=x_j+\delta a\,k_1,
\\
k_2&=v_{\theta,a_{j+1}}(x_{\mathrm{pred}}),
&
x_{j+1}&=x_j+\frac{\delta a}{2}(k_1+k_2).
\end{aligned}
\label{eq:app-d-heun}
\end{equation}
The sampler returns $x_N$ at $a_{\max}<1$, without a terminal correction. Its velocity evaluations span $[0,a_{\max}]$; if $a_{\min}>0$ in training, the initial portion relies on extrapolation below that range. Independent initial draws produce independent outputs conditional on the trained network and any shared Gaussian preprocessing.

\paragraph{Approximation and endpoint truncation.}
The exact intermediate law approaches the target quantitatively. Let $W_2$ denote the quadratic Wasserstein distance and assume $\mathbb E_{\pi_{\mathcal C}}\|X\|^2<\infty$, as holds for the Gaussian-reweighted target here. Coupling $X_a$ and $X$ through \eqref{eq:forward} gives
\begin{equation}
W_2^2(p_a,\pi_{\mathcal C})
\le
(1-a)^2\mathbb E_{\pi_{\mathcal C}}\|X-\tilde\mu\|^2
+(1-a^2)\operatorname{tr}\tilde C.
\label{eq:app-d-truncation-bound}
\end{equation}
The cross term vanishes because the Gaussian noise is independent and centered. This bound isolates endpoint truncation for the exact flow.

The normalization in \eqref{eq:app-d-learned-velocity} also specifies how regression errors enter the velocity. Let $e_{\theta,i}(a,x)$ be the error in coordinate $i$ of the standardized mean correction, using exact Gaussian inputs. Then, for any fixed $a_{\max}<1$,
\begin{equation}
\begin{aligned}
v_{\theta,a,i}(x)-v_{a,i}(x)
&=
\frac{\sqrt{\tilde C_{ii}}}{\sqrt{1-a^2}}\,
e_{\theta,i}(a,x),
\\
\|v_{\theta,a}(x)-v_a(x)\|^2
&\le
\frac{\max_i\tilde C_{ii}}{1-a_{\max}^2}
\|e_\theta(a,x)\|^2,
\qquad 0\le a\le a_{\max}.
\end{aligned}
\label{eq:app-d-velocity-error}
\end{equation}
Thus a fixed truncated interval provides a finite conversion between standardized correction error and velocity error. Heun's second-order convergence holds when the implemented velocity is sufficiently regular on that interval. Training-chain approximation, covariance encoding, Gaussian numerical operations, the learned correction, integration, and truncation together determine the implemented sampling law. The coverage analysis in Appendix~\ref{app:expected-coverage} accounts for their combined distributional effect through the sampler discrepancy.

\subsection{Conditioning a fixed model}
\label{app:measurement-update}

Let $\mathcal C_0$ contain the measurements already incorporated, possibly none, and write $\tilde\rho_{\mathcal C_0}=\mathcal N(\tilde\mu,\tilde C)$. Consider an additional block $y_{\mathrm{new}}\in\mathbb R^{n_{\mathrm{new}}}$ with operator $A\in\mathbb R^{n_{\mathrm{new}}\times d}$ and noise covariance $\Gamma\succ0$, as in \eqref{eq:obs}. Its noise is independent of the field and previously incorporated measurement noise. With the calibrated model and $p_0$ fixed, the configuration becomes $\mathcal C=\mathcal C_0\cup\{(A,y_{\mathrm{new}},\Gamma)\}$, and Gaussian conditioning gives
\begin{equation}
\begin{aligned}
S&=A\tilde C A^\top+\Gamma,
\\
\tilde\mu'
&=\tilde\mu+\tilde C A^\top S^{-1}
(y_{\mathrm{new}}-A\tilde\mu),
\\
\tilde C'
&=\tilde C-\tilde C A^\top S^{-1}A\tilde C.
\end{aligned}
\label{eq:refupdate}
\end{equation}
Here $S$ is the predictive measurement covariance under the reference, and $\tilde\rho_{\mathcal C}=\mathcal N(\tilde\mu',\tilde C')$. Multiplication by the new likelihood yields
\begin{equation}
\pi_{\mathcal C}(x)
=
\pi_{\mathcal C_0}(x\mid y_{\mathrm{new}})
\propto
\mathcal N(x;\tilde\mu',\tilde C')e^{-E_{\mathrm{cal}}(x)}.
\label{eq:cond}
\end{equation}
Conditioning the augmented reference directly is equivalent to conditioning the base on all measurements and applying the same fixed augmentation: likelihood multiplication and multiplication by $e^{-E_{\mathrm{lin}}}$ commute. Each observation is incorporated once. The remaining energy $E_{\mathrm{cal}}$ and the exact denoiser map $D$ therefore remain unchanged.

The same construction accommodates different measurement layouts, values, and counts by rebuilding the Gaussian reference for the desired configuration. The trained network is reused with the resulting Gaussian inputs, with its approximation described in Appendix~\ref{app:covariance-encoding}. Reference updates use the covariance products and measurement-space solves in Appendix~\ref{app:measurement-conditioning}. This separates the fixed training cost from the conditioning and learned-flow computations required for each inference query.

\subsection{Field variability under additional measurements}
\label{app:field-variability}

The reference update in \eqref{eq:refupdate} gives an explicit covariance reduction:
\begin{equation}
\tilde C-\tilde C'
=
\tilde C A^\top S^{-1}A\tilde C
\succeq0,
\qquad
w^\top(\tilde C-\tilde C')w
=
\|S^{-1/2}A\tilde Cw\|^2
\quad\text{for }w\in\mathbb R^d.
\label{eq:app-d-reference-contraction}
\end{equation}
Thus measurements reduce reference uncertainty in directions correlated with the observed quantities, including field values at unmeasured locations. A direction is unchanged precisely when $A\tilde Cw=0$.

When the residual is affine, the exact target is Gaussian as well. Writing $V_{\mathcal C_0}$ and $V_{\mathcal C}$ for the target covariances before and after the new measurements,
\begin{equation}
V_{\mathcal C_0}
=
\left(\bar C_{\mathcal C_0}^{-1}+2\lambda L^\top WL\right)^{-1},
\qquad
V_{\mathcal C}
=
\left(V_{\mathcal C_0}^{-1}+A^\top\Gamma^{-1}A\right)^{-1}
\preceq V_{\mathcal C_0}.
\label{eq:app-d-affine-contraction}
\end{equation}
This ordering holds for every measurement value. For a nonlinear energy, the law of total covariance provides the corresponding average statement. Under $X\sim\pi_{\mathcal C_0}$ and the declared likelihood for future measurements $Y$,
\begin{equation}
\operatorname{Cov}(X)
=
\mathbb E_Y\!\left[\operatorname{Cov}(X\mid Y)\right]
+
\operatorname{Cov}_Y\!\left(\mathbb E[X\mid Y]\right).
\label{eq:app-d-total-covariance}
\end{equation}
Consequently, the expected conditional covariance is no larger than the original covariance in the positive-semidefinite order. This average is taken under the fixed model; the next subsection relates model-based credible sets to coverage under the true field law.

\subsection{Credible sets and expected coverage}
\label{app:coverage}

Generated ensembles represent uncertainty in quantities derived from complete fields. The following result separates the contribution of the physics-defined model from the accuracy of the implemented sampler and credible-set construction.

\subsubsection{Conditional laws and computational discrepancies}
\label{app:coverage-laws}

Let $\mathcal H$ contain the information fixed before a new measurement vector $Y$ is generated: calibration, any measurements in $\mathcal C_0$, the new observation operators and noise covariances, trained parameters, and sampling settings. All laws below are conditional on $\mathcal H$, suppressed in the notation. Let $\pi^\ast$ be the true conditional field law and $\pi_{\mathcal C_0}$ the fixed model. Assume that truth and model share the likelihood $\ell(\mathrm dy\mid x)$ specified by \eqref{eq:obs}, and define
\begin{equation}
P(\mathrm dx,\mathrm dy)
=
\pi^\ast(\mathrm dx)\ell(\mathrm dy\mid x),
\qquad
Q(\mathrm dx,\mathrm dy)
=
\pi_{\mathcal C_0}(\mathrm dx)\ell(\mathrm dy\mid x).
\label{eq:app-d-coverage-joint-laws}
\end{equation}
Write $P_Y$ and $Q_Y$ for the measurement marginals. For a measurable reported quantity $g:\mathbb R^d\to\mathbb R^k$, let $\mathcal T(x,y)=(g(x),y)$ and define the projected joint discrepancy
\begin{equation}
\mathcal D_g
=
\operatorname{KL}\!\left(
\mathcal T_\#P\,\|\,\mathcal T_\#Q
\right),
\label{eq:app-d-projected-discrepancy}
\end{equation}
where $\mathcal T_\#$ denotes the law induced by $\mathcal T$ and KL uses natural logarithms. The assumption $\mathcal D_g<\infty$ includes absolute continuity and integrability for these projected joint laws. It implies $P_Y\ll Q_Y$, so the exact-model conditionals are defined for $P_Y$-almost every measurement vector.

Let $P^{(g)}(\cdot\mid y)$, $Q^{(g)}(\cdot\mid y)$, and $Q_\theta^{(g)}(\cdot\mid y)$ be the true, exact-model, and implemented-sampler conditional laws of $g(X)$, respectively. The exact-model field law is $\pi_{\mathcal C}$ from \eqref{eq:cond}. The implemented law includes the numerical sampling procedure.

For $0<\alpha<1$, let $C_\alpha^{(g)}(Y,U)$ be a measurable random credible set, where $U$ contains the randomness used to construct it. Assume $U\perp X^\ast\mid(Y,\mathcal H)$, with $X^\ast$ drawn from the true field law. Using $\operatorname{TV}(\mu,\nu)=\sup_B|\mu(B)-\nu(B)|$ and $[r]_+=\max(r,0)$, define
\begin{equation}
\begin{aligned}
\epsilon_{\mathrm{samp}}
&=
\mathbb E_{P_Y}\operatorname{TV}\!\left(
Q_\theta^{(g)}(\cdot\mid Y),
Q^{(g)}(\cdot\mid Y)
\right),
\\
\epsilon_{\mathrm{set}}
&=
\left[
\alpha-\mathbb E_{P_Y,U}
Q_\theta^{(g)}\!\left(
C_\alpha^{(g)}(Y,U)\mid Y
\right)
\right]_+,
\\
\epsilon_{\mathrm{comp}}
&=
\epsilon_{\mathrm{samp}}+\epsilon_{\mathrm{set}}.
\end{aligned}
\label{eq:coverage-budgets}
\end{equation}
The sampler term measures its discrepancy from the exact model. The set term measures any deficit in expected set probability under the sampler. This probability is evaluated for a fresh draw independent of $U$ given $Y$, rather than by counting the samples used to construct their own set. Both terms average over the true measurement marginal.

\subsubsection{Expected coverage bound}
\label{app:expected-coverage}

\begin{theorem}[Expected functional coverage]
\label{thm:coverage}
Under the fixed-information setup above, the shared likelihood, and $U\perp X^\ast\mid(Y,\mathcal H)$, suppose $\mathcal D_g<\infty$. Then
\begin{equation}
\mathbb P\!\left\{
g(X^\ast)\in C_\alpha^{(g)}(Y,U)
\right\}
\ge
\alpha-\sqrt{\mathcal D_g/2}-\epsilon_{\mathrm{comp}}.
\label{eq:coverage}
\end{equation}
The probability averages over the true field, measurements, and set construction, conditional on $\mathcal H$.
\end{theorem}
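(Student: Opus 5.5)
The plan is to build a chain of three comparisons running from the true coverage probability down to the quantity controlled by $\epsilon_{\mathrm{set}}$. Each link costs one of the terms in \eqref{eq:coverage}. Write $F(y)=\mathbb E_U[\,\cdot\,]$ for averaging over the set randomness. By the assumption $U\perp X^\ast\mid(Y,\mathcal H)$, conditioning on $Y=y$ gives
\[
\mathbb P\{g(X^\ast)\in C_\alpha^{(g)}(Y,U)\}=\mathbb E_{P_Y}\mathbb E_U\,P^{(g)}\bigl(C_\alpha^{(g)}(Y,U)\mid Y\bigr).
\]
Here $(X^\ast,Y)\sim P$ because truth and model share the likelihood $\ell$. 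This reduces the claim to lower bounds on conditional set probabilities, averaged over $P_Y$ and $U$.

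\textbf{Step 1: truth to exact model.} For each fixed $y$ and $u$, the set $B=C_\alpha^{(g)}(y,u)$ is measurable, so
\[
P^{(g)}(B\mid y)\ge Q^{(g)}(B\mid y)-\operatorname{TV}\bigl(P^{(g)}(\cdot\mid y),Q^{(g)}(\cdot\mid y)\bigr).
\]
Taking expectations, the loss is at most $\mathbb E_{P_Y}\operatorname{TV}(P^{(g)}(\cdot\mid Y),Q^{(g)}(\cdot\mid Y))$. Pinsker's inequality, applied conditionally, bounds this by $\mathbb E_{P_Y}\sqrt{\tfrac12\operatorname{KL}(P^{(g)}(\cdot\mid Y)\|Q^{(g)}(\cdot\mid Y))}$. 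Jensen's inequality (concavity of the square root) then gives at most $\sqrt{\tfrac12\mathbb E_{P_Y}\operatorname{KL}(\cdots)}$.

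To finish this step, use the KL chain rule on the projected joint laws $\mathcal T_\#P$ and $\mathcal T_\#Q$, factoring each as a $Y$-marginal times a $g(X)\mid Y$ conditional:
\[
\mathcal D_g=\operatorname{KL}(P_Y\|Q_Y)+\mathbb E_{P_Y}\operatorname{KL}\bigl(P^{(g)}(\cdot\mid Y)\|Q^{(g)}(\cdot\mid Y)\bigr)\ge \mathbb E_{P_Y}\operatorname{KL}\bigl(P^{(g)}(\cdot\mid Y)\|Q^{(g)}(\cdot\mid Y)\bigr).
\]
This yields the $\sqrt{\mathcal D_g/2}$ term. The conditional of $\mathcal T_\#Q$ given $Y$ is the law of $g(X)$ under $\pi_{\mathcal C}$ from \eqref{eq:cond}, by Bayes' rule under the shared likelihood.

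\textbf{Step 2: exact model to implemented sampler.} The same pointwise total-variation inequality gives
\[
Q^{(g)}(B\mid y)\ge Q_\theta^{(g)}(B\mid y)-\operatorname{TV}\bigl(Q_\theta^{(g)}(\cdot\mid y),Q^{(g)}(\cdot\mid y)\bigr).
\]
Averaging over $P_Y$ and $U$ costs exactly $\epsilon_{\mathrm{samp}}$.

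\textbf{Step 3: set construction.} By definition of $[\cdot]_+$,
\[
\mathbb E_{P_Y,U}\,Q_\theta^{(g)}\bigl(C_\alpha^{(g)}(Y,U)\mid Y\bigr)\ge\alpha-\epsilon_{\mathrm{set}}.
\]
Chaining the three steps gives $\alpha-\sqrt{\mathcal D_g/2}-\epsilon_{\mathrm{samp}}-\epsilon_{\mathrm{set}}$, which is \eqref{eq:coverage}.

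\textbf{Main obstacle: measure-theoretic bookkeeping in Step 1.} Three things need care. First, the conditional laws must exist as regular conditional distributions on $\mathbb R^k$ (standard Borel), and the chain rule for KL must hold with finite $\mathcal D_g$. Second, $P_Y\ll Q_Y$ must hold so that $Q^{(g)}(\cdot\mid y)$ is well defined $P_Y$-a.s.; finiteness of $\mathcal D_g$ provides this. Third, the conditional-independence assumption must legitimately let $U$ be integrated out after fixing $y$. That assumption ensures $X^\ast$ given $(Y,U)$ has the same conditional law as given $Y$ alone, so the first display above is valid. Measurability of $(y,u)\mapsto P^{(g)}(C_\alpha^{(g)}(y,u)\mid y)$ follows from measurability of the set-valued map, and I will assume this as part of ``measurable random credible set.''
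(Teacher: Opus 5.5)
Your proposal is correct and follows the paper's proof essentially step for step. You use conditional independence to reduce the coverage to $\mathbb E_{P_Y,U}P^{(g)}(C\mid Y)$, compare truth to the exact model via the KL chain rule, Pinsker, and Jensen (costing $\sqrt{\mathcal D_g/2}$), compare the exact model to the sampler (costing $\epsilon_{\mathrm{samp}}$), and close with the definition of $\epsilon_{\mathrm{set}}$; your measure-theoretic remarks on regular conditionals, $P_Y\ll Q_Y$, and measurability are sensible details the paper leaves implicit.
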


\begin{proof}
The KL chain rule for the projected joint laws bounds their expected conditional KL by $\mathcal D_g$. Pinsker's and Jensen's inequalities~\citep{sason2016fdivergence} give
\begin{equation}
\begin{aligned}
&\mathbb E_{P_Y}
\operatorname{TV}\!\left(
P^{(g)}(\cdot\mid Y),Q^{(g)}(\cdot\mid Y)
\right)
\\
&\quad\le
\sqrt{\frac12\mathbb E_{P_Y}
\operatorname{KL}\!\left(
P^{(g)}(\cdot\mid Y)\,\|\,Q^{(g)}(\cdot\mid Y)
\right)}
\le
\sqrt{\mathcal D_g/2}.
\end{aligned}
\label{eq:app-d-coverage-tv-bound}
\end{equation}
Set $C=C_\alpha^{(g)}(Y,U)$. Conditional independence and two total-variation comparisons yield
\begin{equation}
\begin{aligned}
\mathbb P\{g(X^\ast)\in C\}
&=
\mathbb E_{P_Y,U}P^{(g)}(C\mid Y)
\\
&\ge
\mathbb E_{P_Y,U}Q_\theta^{(g)}(C\mid Y)
-\epsilon_{\mathrm{samp}}-\sqrt{\mathcal D_g/2}
\\
&\ge
\alpha-\epsilon_{\mathrm{set}}
-\epsilon_{\mathrm{samp}}-\sqrt{\mathcal D_g/2}.
\end{aligned}
\label{eq:app-d-coverage-comparison}
\end{equation}
The definition of $\epsilon_{\mathrm{comp}}$ completes the proof.
\end{proof}

The projected discrepancy allows the result to focus on the quantities being reported. When the field-level KL is finite, the shared likelihood, data processing, and the KL chain rule additionally give
\begin{equation}
\begin{aligned}
\mathcal D_g
&\le
\operatorname{KL}(\pi^\ast\|\pi_{\mathcal C_0}),
\\
\mathbb E_{P_Y}
\operatorname{KL}\!\left(
P(\cdot\mid Y)\,\|\,Q(\cdot\mid Y)
\right)
&=
\operatorname{KL}(\pi^\ast\|\pi_{\mathcal C_0})
-\operatorname{KL}(P_Y\|Q_Y)
\\
&\le
\operatorname{KL}(\pi^\ast\|\pi_{\mathcal C_0}).
\end{aligned}
\label{eq:posteriorbound}
\end{equation}
Here the conditional laws in the second line concern the full field. Theorem~\ref{thm:coverage} requires only finite $\mathcal D_g$, rather than finite field-level KL.

\paragraph{Interpretation.}
\label{app:coverage-support}
The theorem gives average coverage for a fixed model and observation likelihood, with explicit contributions from model discrepancy, sampling approximation, and set construction. Under correct specification, exact sampling, and expected set probability $\alpha$, expected true coverage equals $\alpha$ by conditioning on $Y$. With approximation, \eqref{eq:coverage} quantifies the corresponding lower bound. Its finite-discrepancy and support assumptions are essential, and its averaging does not assert the same coverage for every individual measurement outcome. These conditions align the result with the fixed-model conditioning and approximate sampling used by PhysDEM.

\section{Experimental details}
\label{app:experiments}

\subsection{Cases and data sources}
\label{app:exp-setup}

All cases use a $64\times64$ spatial grid, 48 time states, and 64-member ensembles. The baseline comparison uses $K=4$ wells; measurement reuse tests $K\in\{1,2,4,8\}$ with a fixed checkpoint per case. Synthetic reference fields are sampled by pCN from the declared model, and conditioning values include Gaussian noise.

\paragraph{Darcy flow.}
The unknowns are static log-permeability $g$ and pressure $u$:
\begin{equation}
c_D\partial_t u-\nabla\!\cdot\!\left(e^g\nabla u\right)-q=0,
\label{eq:exp-darcy}
\end{equation}
where $c_D=1$ is the storage coefficient and $q$ is a prescribed injection--extraction source. The synthetic case uses no-flow boundaries and zero initial pressure.

\paragraph{Advection--diffusion.}
The unknowns are static streamfunction $\psi$, source $s$, and concentration $c$:
\begin{equation}
\begin{aligned}
\partial_t c+\nabla\!\cdot(\mathbf v c)-D_c\Delta c+\kappa c-s&=0,\\
\mathbf v=\nabla^\perp\psi&=(-\partial_{\xi_2}\psi,\partial_{\xi_1}\psi).
\end{aligned}
\label{eq:exp-advdiff}
\end{equation}
The known diffusivity and decay rate are $D_c=2$ and $\kappa=0.5$. The synthetic case starts from zero concentration and uses zero exterior ghost-cell concentrations for both advective and diffusive boundary fluxes.

\paragraph{Fisher--KPP.}
Static log-diffusivity $g$ and state $u$ obey
\begin{equation}
\partial_t u-\nabla\!\cdot\!\left(e^g\nabla u\right)-r u(1-u)=0,
\label{eq:exp-fisher-kpp}
\end{equation}
with known growth rate $r=2$, no-flow boundaries, and an initial field formed from three Gaussian seeds clipped at one. Darcy and Fisher--KPP observe both $g$ and $u$; advection--diffusion observes $c$ only.

\paragraph{Discretization.}
Cell-centered finite volumes use harmonic face coefficients for heterogeneous diffusion and conservative centered advection fluxes. Darcy and advection--diffusion use backward Euler; Fisher--KPP treats diffusion implicitly and reaction explicitly. Known initial fields enter through additional residual rows.

\paragraph{SPE10.}
This case uses permeability structure from Model 2, layer 10 of the Tenth SPE Comparative Solution Project~\mbox{\citep{christie2001spe10}}. A window of the log-permeability field is resampled to the $64\times64$ inference grid, centered, and scaled by $0.5$; the static prior scales are informed by this selected field. The pressure reference is obtained by solving the Darcy equation~(\ref{eq:exp-darcy}) on a finer $256\times256$ grid and block-averaging onto the inference grid. Conditioning values are generated by adding Gaussian noise to log-permeability and pressure at the selected wells. This provides a controlled assessment of field recovery from scarce measurements in a heterogeneous reservoir benchmark.

\paragraph{Bemidji.}
This transport case incorporates field information from the USGS crude-oil spill research site near Bemidji, Minnesota~\mbox{\citep{trost2018bemidji}}. The data comprise 147 benzene records from 38 wells over seven campaigns between 1986 and 1995. Site geometry, well locations, campaign times, and a regional flow estimate inform the problem construction. The advection--diffusion equation~(\ref{eq:exp-advdiff}) uses site-specific coefficients, a known regional streamfunction plus a scaled unknown perturbation, and an unknown source field. The records inform the source and reference construction, and a forward solve generates the concentration reference. Conditioning values are synthetic noisy measurements at real well locations and campaign times, yielding 56 values at $K=8$. Comparison with the 147 records therefore assesses agreement with the source data; it is descriptive rather than an independent held-out validation.

\subsection{Baseline implementations and evaluation}
\label{app:exp-baselines}

The eight baselines are task-adapted implementations, summarized in Table~\ref{tab:exp-baseline-variants}. Their learned modules use 4,096 complete training fields and 512 validation fields per case, sampled approximately from the fixed physics-defined target. PhysDEM instead constructs training fields through internal approximate pCN pools. Fixed-reference comparisons share the reference and measurement configuration. The archived FNOPE implementation generates static parameters and completes states by a deterministic forward solve. These results characterize the evaluated implementations.

\begin{table}[t]
\centering
\caption{Implemented baseline variants.}
\label{tab:exp-baseline-variants}
\begin{tabularx}{\textwidth}{@{}l>{\raggedright\arraybackslash}X@{}}
\toprule
Method & Adaptation used in the comparison \\
\midrule
DiffusionPDE & Joint diffusion with observation and PDE guidance~\citep{huang2024diffusionpde}. \\
DPS & A learned joint diffusion prior with likelihood-gradient guidance~\citep{chung2023dps}. \\
DAPS & Decoupled annealing with diffusion denoising and Langevin likelihood updates~\citep{zhang2025daps}. \\
PnP-DM & Alternating diffusion-prior and Gaussian-likelihood sampling~\citep{wu2024pnpdm}. \\
FunDPS & Correlated-noise diffusion with observation guidance; PDE-guidance strength is validation-selected~\citep{yao2025fundps}. \\
PIDM & Physics-informed diffusion training with DPS observation guidance at inference~\citep{bastek2025physics}. \\
Di-BiLPS & Conditional encoding, a variational autoencoder, and latent diffusion~\citep{li2026dibilps}. \\
FNOPE & Conditional parameter generation followed by deterministic state completion~\citep{moss2025fnope}. \\
\bottomrule
\end{tabularx}
\end{table}

\begin{figure}[t]
\centering
\includegraphics[width=\textwidth]{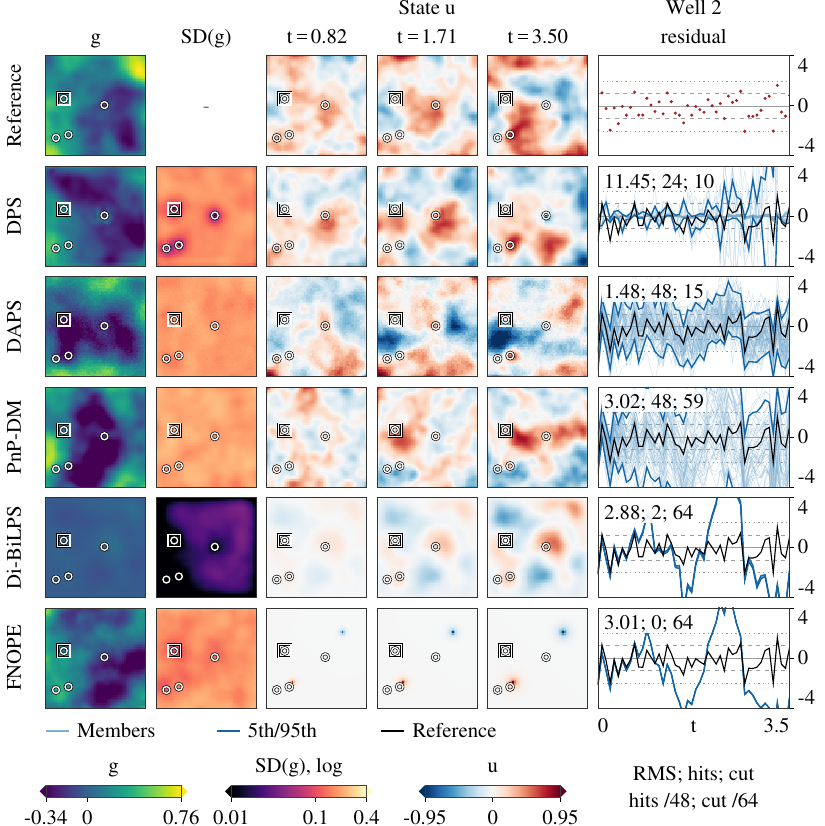}
\caption{Additional Darcy baseline comparisons at $K=4$ with 64-member ensembles. Reference fields, observations, sample selection, and plotting conventions match Figure~\ref{fig:darcy-main}; rows contain the five remaining methods. FNOPE pressure fields are completed by its deterministic forward solver.}
\label{fig:darcy-baselines-appendix}
\end{figure}

\paragraph{Reported diagnostics.}
\label{app:exp-metrics}
For $N=64$ generated fields $x^{(n)}$, the stacked observation model $(A,y,\Gamma)$ gives
\begin{equation}
z_{\mathrm{obs}}
=\left[\frac{1}{N n_y}\sum_{n=1}^{N}
(Ax^{(n)}-y)^\top\Gamma^{-1}(Ax^{(n)}-y)\right]^{1/2},
\qquad n_y=\dim(y).
\label{eq:exp-obs-diagnostic}
\end{equation}
This averages squared standardized errors over members and scalar observations. The residual diagnostic is
\begin{equation}
r_{\mathrm{PDE}}
=\frac{\operatorname{median}_{n}\|\mathcal R(x^{(n)})\|_W^2}{\varepsilon},
\qquad \varepsilon=0.09^2.
\label{eq:exp-residual-diagnostic}
\end{equation}
Here $\mathcal R$ includes the selected, rescaled Fourier residuals and initial-condition rows; the even-ensemble median uses the lower central value. One denotes the noise or prescribed residual scale, not a universal optimum. In particular, a zero residual from deterministic state completion does not establish joint posterior calibration.

\paragraph{Mean-rank checks.}
Following simulation-based calibration~\citep{talts2020sbc}, each archived evaluation uses $M=48$ truths and $N=64$ conditional samples. For statistic $T_j$, define
\begin{equation}
R_{ij}=\sum_{n=1}^{N}\mathbf{1}\{T_j(x_i^{(n)})<T_j(x_i^\star)\},
\quad
\bar R_j=\frac{1}{M}\sum_{i=1}^{M}R_{ij},
\quad
z_j=\frac{\bar R_j-N/2}{\sqrt{v_R/M}}.
\label{eq:exp-sbc-rank}
\end{equation}
Archived conventions use $v_R=N^2/12$ for legacy evaluations or $v_R=N(N+2)/12$ with randomized tie handling for common-v2 evaluations. For Darcy, Fisher--KPP, and SPE10, the five statistics are $\overline g$, $\overline{g^2}$, $\overline{u_{24}}$, $\overline{u_{47}}$, and $\overline{u^2}$. The transport cases use seven: $\overline\psi$, $\overline{\psi^2}$, $\overline s$, $\overline{s^2}$, $\overline{c_{24}}$, $\overline{c_{47}}$, and $\overline{c^2}$. Bars denote spatial averages, except the last statistic, which averages over space and time; frame indices are zero-based. Table~\ref{tab:baselines} counts $|z_j|<2.6$ for five statistics or $|z_j|<2.7$ for seven. These are mean-rank checks, not full rank-uniformity tests. Truth pools and rank conventions differ across archives, so comparisons are descriptive rather than paired.

\paragraph{Reading the Darcy comparisons.}
\label{app:exp-baseline-results}
Figure~\ref{fig:darcy-main} and Figure~\ref{fig:darcy-baselines-appendix} show each method's first saved sample, static-field ensemble standard deviation on a logarithmic scale, and pressure at $t=0.82,1.71,3.50$. Circles mark wells; the square selects well 2. Thin blue curves show standardized observation residuals, thick blue curves their pointwise 5th/95th percentiles, and black the reference residual. The annotations give residual RMS, the number of 48 times the interval contains the reference, and the number of 64 trajectories exceeding the displayed $\pm4$ range. Statistics use unclipped values; shared color limits may saturate. The hit count describes one well and reference, not simultaneous-trajectory coverage.

\paragraph{Reading the Bemidji comparison.}
\label{app:exp-bemidji}
Figure~\ref{fig:bemidji-site-informed} displays the 1995 mean and standard deviation after transforming concentrations to log scale. Model values are floored at $10^{-6}$\,mg/L before transformation; recorded values use $10^{-4}$\,mg/L. Nonpositive model values map to the plotting floor. The pointwise 5th/95th percentiles describe latent-field ensembles without added observation noise and contain 56 of the 147 source records. As explained in Appendix~\ref{app:exp-setup}, these records also inform reference construction.

\begin{figure}[t]
\centering
\includegraphics[width=\textwidth]{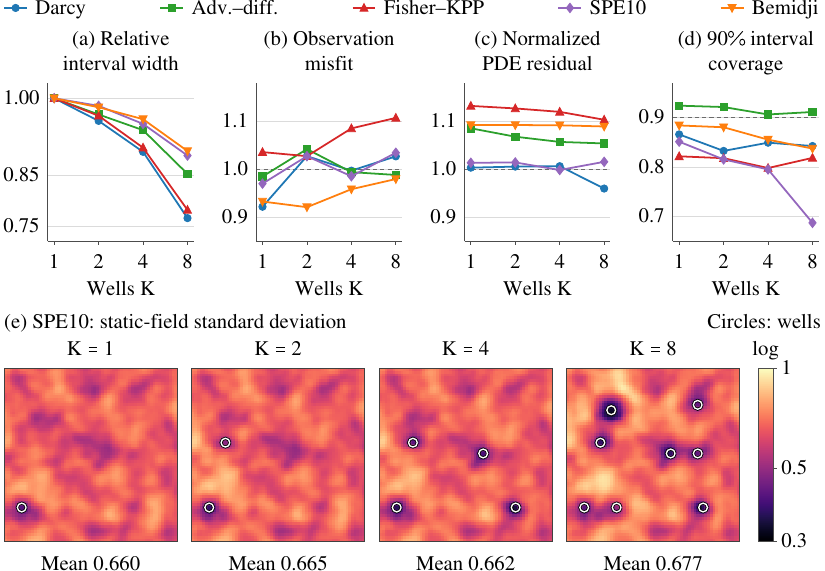}
\caption{Measurement reuse with SPE10 static-field uncertainty. Panels (a--d) repeat Figure~\ref{fig:measurement-reuse}; panel (e) shows the ensemble standard deviation of log permeability with a common logarithmic color scale. Circles mark measured wells, and labels give spatial means.}
\label{fig:measurement-reuse-spe10-appendix}
\end{figure}

\subsection{Reuse across measurement counts}
\label{app:exp-measurement-reuse}

Each case reuses one trained checkpoint and the same reference field for $K\in\{1,2,4,8\}$, drawing 64 samples per configuration. Measurement conditioning updates the augmented Gaussian reference while the denoiser weights and physics remain fixed. Well locations are nested, but dynamic observation noise is regenerated across $K$; this is a comparison of configurations, not sequential addition of observations with all previous values retained. Darcy, Fisher--KPP, and SPE10 use $49K$ scalar observations (one static and 48 dynamic values per well); advection--diffusion uses $48K$ and Bemidji $7K$.

\paragraph{Interval and spread diagnostics.}
Let $L_{K,j}$ and $U_{K,j}$ be the empirical 5th and 95th percentiles across the 64 samples at coordinate $j$, using linear interpolation. For the complete field vector of dimension $d$,
\begin{equation}
W_K=\frac{1}{d}\sum_{j=1}^{d}(U_{K,j}-L_{K,j}),
\qquad
C_K=\frac{1}{d}\sum_{j=1}^{d}
\mathbf{1}\{L_{K,j}\leq x_j^\star\leq U_{K,j}\}.
\label{eq:exp-reuse-intervals}
\end{equation}
Figure~\ref{fig:measurement-reuse}(a,d) reports $W_K/W_1$ and $C_K$. All static-field coordinates and all 48 dynamic frames enter with equal coordinate weights, including unobserved positions. Thus $C_K$ is a fixed-reference inclusion fraction, not repeated-data coverage or SBC. Observation and residual diagnostics follow Appendix~\ref{app:exp-metrics}. The maps show pointwise static-field sample standard deviations with denominator $N-1$; their annotations are spatial means.

\paragraph{Complementary SPE10 results.}
Figure~\ref{fig:measurement-reuse-spe10-appendix} supplements the Fisher--KPP maps. SPE10's mean static-field standard deviation changes from $0.660$ to $0.677$ between $K=1$ and $K=8$, while its full-field interval width decreases by $11.2\%$ and $C_K$ falls from $0.851$ to $0.687$. Interval contraction therefore does not imply improved calibration or shrinking uncertainty in every field component.

\paragraph{Weakly constrained static fields.}
\label{app:exp-static-identifiability}
Advection--diffusion and Bemidji observe concentration only. At $K=4$, the ensemble-mean relative $L^2$ errors for the streamfunction $\psi$ are $1.028$ and $1.019$, and its spread--skill ratios are $0.938$ and $1.162$, respectively. The relative error is the Euclidean reconstruction error divided by the reference norm; SSR follows equation~(\ref{eq:exp-ablation-ssr}) with $g$ replaced by $\psi$. From $K=1$ to $K=8$, the spatial RMS ensemble spread changes by less than $0.2\%$ in both cases. These results are consistent with weak constraints on $\psi$; they do not establish that its posterior equals its prior.

\subsection{Component ablations and tolerance sensitivity}
\label{app:ablation}

Table~\ref{tab:ablation} uses Darcy at $K=4$ with 64-member ensembles. The training variants disable reference augmentation or replace $(X-m_a)/s$ by $(X-m_a)/\tau$, where $\tau=1-a^2$, retaining the main calibration. In the available implementation, the augmentation flag affects training references, whereas the evaluation scripts still construct augmented references. The no-augmentation result therefore also reflects a training--inference reference mismatch and does not isolate end-to-end removal of augmentation. The reference-only variant draws directly from the augmented Gaussian without learned transport; its SBC entry was not evaluated.

\paragraph{Diagnostics.}
SBC is $\max_j|z_j|$ over the five field statistics defined in Appendix~\ref{app:exp-metrics}, evaluated on 48 truths (pass if below $2.6$). A dash means not evaluated. Other diagnostics use a fixed reference field per run. With $\bar g=N^{-1}\sum_n g^{(n)}$, $N=64$, and $d_g=4096$, the archived spread--skill ratio is
\begin{equation}
\mathrm{SSR}_g=
\frac{\displaystyle
\sqrt{\frac{N+1}{N}\frac{1}{d_g(N-1)}
\sum_{j=1}^{d_g}\sum_{n=1}^{N}(g_j^{(n)}-\bar g_j)^2}}
{\displaystyle
\sqrt{\frac{1}{d_g}\sum_{j=1}^{d_g}(\bar g_j-g_j^\star)^2}}.
\label{eq:exp-ablation-ssr}
\end{equation}
One is a reference spread--error scale; moving closer to it does not establish improvement in every diagnostic.

\paragraph{Tolerance sensitivity.}
The $0.05$ and $0.15$ variants recalibrate and retrain, changing the target distribution. All ratios in Table~\ref{tab:ablation} divide by the common main tolerance $\varepsilon=0.09^2=0.0081$, yielding $0.35$ and $2.10$; using each variant's own squared tolerance would instead give approximately $1.12$ and $0.76$. The $0.15$ setting has $\mathrm{SSR}_g=1.08$, closer to one than the full model's $0.89$, but its mean-rank statistic is $5.20$. These results illustrate trade-offs across diagnostics rather than a uniformly optimal setting.

\subsection{Sampling cost and noise sensitivity}
\label{app:exp-cost-noise}

\paragraph{Archived sampling cost.}
Figure~\ref{fig:sampling-cost} uses elapsed batch time divided by the number of generated members, not single-sample latency. The PhysDEM point uses 64 samples in network minibatches of two, EMA weights, 150 Heun steps (300 network evaluations), and $a_{\max}=0.995$. Its $1.47$\,s/member includes sampling, summary-statistic computation, and transfer of those statistics to the CPU; reference construction, residual scoring, and offline training are outside the timer. DiffusionPDE takes $4.38$\,s/member in its archived run. Batch sizes and CUDA-graph settings differ across methods; DiffusionPDE and Di-BiLPS times come from imported run records. The comparison therefore characterizes these implementations, not a controlled end-to-end hardware benchmark.

\paragraph{Observation-noise variance.}
Figure~\ref{fig:noise-sensitivity} uses $K=4$ and 64 samples per setting. Each case reuses a fixed checkpoint while the observation-noise covariance is multiplied by $0.1$, $1$, or $10$. Gaussian conditioning and simulated measurement noise use the corresponding covariance; these are variance changes, not a test of misspecified noise. Across the 15 settings, $z_{\mathrm{obs}}$ ranges from $0.906$ to $1.146$ and $r_{\mathrm{PDE}}$ from $0.974$ to $1.120$, using the definitions in Appendix~\ref{app:exp-metrics}. The dashed unit lines mark reference scales. These results support stability of the reported diagnostics under the tested noise changes, without establishing full posterior calibration.

\end{document}